\newtheorem*{theorem*}{Theorem}
\newtheorem*{definition*}{Definition}
\newtheorem{lemma}{Lemma}
\newtheorem{theorem}{Theorem}
\newtheorem{corollary}{Corollary}
\renewcommand{\algorithmiccomment}[1]{\bgroup\hfill\footnotesize~#1\egroup}
\newcommand{\algone}{{\textsc{Adaptive-Sequencing}}}
\newcommand{\algoptimized}{{\textsc{Fast}}}
\newcommand{\algfull}{{\textsc{Fast-Full}}}
\newcommand{\sample}{\textsc{Sample}}
\newcommand{\sequence}{\textsc{Sequence}}
\newcommand{\geom}{\textsc{Geometric}}
\DeclareMathOperator{\E}{\mathbb{E}}
\newcommand{\EU}[1][]{\underset{#1}{\E}}
\DeclareMathOperator{\poly}{poly}
\newcommand{\OPT}{\texttt{OPT}}
\renewcommand{\O}{\mathcal{O}}
\title{The \textsc{Fast} Algorithm for Submodular Maximization}
\author{
  Adam Breuer\\
  Harvard University\\
  \texttt{breuer@g.harvard.edu} 
  \and  Eric Balkanski \\
  Harvard University\\
  \texttt{ericbalkanski@g.harvard.edu} 
\and
  Yaron Singer \\
  Harvard University\\
  \texttt{yaron@seas.harvard.edu}
}
\date{}
\begin{document}

\setcounter{page}{0}

\maketitle

\begin{abstract}
In this paper we describe a new algorithm called Fast Adaptive Sequencing Technique (\textsc{Fast}) for maximizing a monotone submodular function under a cardinality constraint $k$ whose approximation ratio is arbitrarily close to $1-1/e$, is $\O(\log(n) \log^2(\log k))$ adaptive, and uses a total of $\O(n \log\log(k))$ queries.  Recent algorithms have comparable guarantees in terms of asymptotic worst case analysis, but their actual number of rounds and query complexity depend on very large constants and polynomials in terms of precision and confidence, making them impractical for large data sets. Our main contribution is a design that is extremely efficient both in terms of its non-asymptotic worst case query complexity and number of rounds, and in terms of its practical runtime.  We show that this algorithm outperforms any algorithm for submodular maximization we are aware of, including hyper-optimized parallel versions of state-of-the-art serial algorithms, by running experiments on large data sets. These experiments show \algoptimized \ is orders of magnitude faster than the state-of-the-art. 
\end{abstract}

\newpage


\section{Introduction}
In this paper we describe a fast parallel algorithm for submodular maximization.  Informally, a function is submodular if it exhibits a natural diminishing returns property.  For the canonical problem of maximizing a monotone submodular function under a cardinality constraint, it is well known that the greedy algorithm, which iteratively adds elements whose marginal contribution is largest to the solution, obtains a $1-1/e$ approximation guarantee~\cite{NWF78} which is optimal for polynomial-time algorithms~\cite{nemhauser1978best}.  The greedy algorithm and other submodular maximization techniques are heavily used in machine learning and data mining since many  fundamental objectives such as  entropy, mutual information, graphs cuts, diversity, and set cover are all submodular.  

In recent years there has been a great deal of progress on fast algorithms for submodular maximization designed to accelerate computation on large data sets.  The first line of work considers \emph{serial} algorithms where queries can be evaluated on a single processor~\cite{LKGFVG07,BV14,mirzasoleiman2015lazier,mirzasoleiman2016fast,EN19-1,EN19-2}.  For serial algorithms the state-of-the-art for maximization under a cardinality constraint is the \emph{lazier-than-lazy-greedy} (\textsc{LTLG}) algorithm which returns a solution that is in expectation  arbitrarily close to the optimal $1-1/e$  and does so in a linear number of queries~\cite{mirzasoleiman2015lazier}.  This algorithm is a stochastic greedy algorithm coupled with lazy updates, which not only performs well in terms of the quality of the solution it returns but is also very fast in practice.  
  
Accelerating computation beyond linear runtime requires \emph{parallelization}.  The parallel runtime of blackbox optimization is measured by \emph{adaptivity}, which is the number of sequential rounds an algorithm requires when polynomially-many queries can be executed in parallel in every round.  For maximizing a submodular function defined over a ground set of $n$ elements under a cardinality constraint $k$, the adaptivity of the naive greedy algorithm is $\O(k)$, which in the worst case is $\O(n)$.  Until recently no algorithm was known to have better parallel runtime than that of naive greedy. 

A very recent line of work initiated by Balkanski and Singer~\cite{BS18a} develops techniques for designing constant factor approximation algorithms for submodular maximization whose parallel runtime is \emph{logarithmic}~\cite{BS18b,BRS19,EN19,FMZ19,FMZ18,BBS18,KMZLK19,chekuri2018submodular,BRS19b,chekuri2018matroid,ene2019,chen2019,FMZ19}.  In particular, ~\cite{BS18a} describe a technique called \emph{adaptive sampling} that obtains in $\O(\log n)$ rounds an approximation arbitrarily close to $1/3$ for maximizing a monotone submodular function under a cardinality constraint. This technique can be used to produce solutions arbitrarily close to the optimal $1-1/e$ in $\O(\log n)$ rounds~\cite{BRS19,EN19}.

\subsection{From theory to practice}
The focus of the work on adaptive complexity described above has largely been on conceptual and theoretical contributions: achieving strong approximation guarantees under various constraints with runtimes that are exponentially faster under worst case theoretical analysis.  From a practitioner's perspective however, even the state-of-the-art algorithms in this genre are infeasible for large data sets.  The logarithmic parallel runtime of algorithms in this genre carries extremely large constants and polynomial dependencies on  precision and confidence parameters that are hidden in their asymptotic analysis.  In terms of sample complexity alone, obtaining (for example) a $1-1/e-0.1$ approximation with $95\%$ confidence for maximizing a submodular function under cardinality constraint $k$ requires evaluating at least $10^8$~\cite{BRS19} or $10^6$~\cite{FMZ19, chekuri2018submodular} samples of sets of size approximately $\frac{k}{\log n}$ in every round.  Even if one heuristically uses a single sample in every round, other sources of inefficiencies that we discuss throughout the paper that prevent these algorithms from being applied even on moderate-sized data sets.  The question is then whether the plethora of breakthrough techniques in this line of work of exponentially faster algorithms for submodular maximization can lead to algorithms that are fast in practice for large problem instances.

\subsection{Our contribution}
In this paper we design a new algorithm called Fast Adaptive Sequencing Technique (\textsc{Fast}) whose approximation ratio is arbitrarily close to $1-1/e$, has $\O(\log(n) \log^2(\log k))$ adaptivity, and uses a total of $\O(n \log\log(k))$ queries for maximizing a monotone submodular function under a cardinality constraint $k$.  
The main contribution is not in the algorithm's asymptotic guarantees but in its design that is extremely efficient both in terms of its non-asymptotic worst case query complexity and number of rounds, and in terms of its practical runtime.  In terms of \emph{actual} query complexity and practical runtime, this algorithm outperforms any algorithms for submodular maximization we are aware of, including hyper-optimized versions of \textsc{LTLG}.  To be more concrete, we give a brief experimental comparison in the table below for a max-cover objective on a Watts-Strogatz random graph with $n=500$ nodes against optimized implementations of algorithms with the same adaptivity and approximation  (experiment details in Section~\ref{sec:experiments}).  

\begin{center}
\begin{tabular}{llllll}\label{tab:comp}
                        &   & \emph{rounds} & \emph{queries} & \emph{time (sec)}  &  \\
\textsc{Amortized-Filtering}&~\cite{BRS19}        & 540    & 35471   & 0.58  &  \\
\textsc{Exhaustive-Maximization}&~\cite{FMZ19}    & 12806  & 4845205 & 55.14 &  \\
\textsc{Randomized-Parallel-Greedy}&~\cite{chekuri2018submodular} & 66   & 81648  & 1.36  &  \\
\textsc{Fast}                   &    & \textbf{18}     & \textbf{2497}    & \textbf{0.051} &  \\
\end{tabular}
\end{center}

\algoptimized \ achieves its speedup by thoughtful design that results in frugal worst case query complexity as well as several heuristics used for practical speedups.  From a purely analytical perspective, \algoptimized \ improves the $\varepsilon$ dependency in the linear term of the query complexity of at least $\O(\varepsilon^{-5}n)$ in~\cite{BRS19, EN19} and $\O(\varepsilon^{-3}n)$ in~\cite{FMZ19} to $\O(\varepsilon^{-2} n)$.  We provide the first non-asymptotic bounds on the query and adaptive complexity  of an algorithm with sublinear adaptivity, showing dependency on small constants. In Appendix~\ref{sec:appcomparison}, we compare these query and adaptive  complexity achieved by \algoptimized \ to previous work. Our algorithm uses adaptive sequencing~\cite{BRS19b} and multiple optimizations to improve the  query complexity and runtime. 

\subsection{Paper organization}
We introduce the main ideas and decisions behind the design of \algoptimized \ in Section~\ref{sec:cardinality}.  We describe and analyze guarantees in Section~\ref{sec:algorithm}.  We discuss experiments in Section~\ref{sec:experiments}.


\section{\algoptimized \ Overview}
\label{sec:cardinality}

Before describing the algorithm, we give an overview of the major ideas and discuss how they circumvent the bottlenecks for practical implementation of existing logarithmic adaptivity algorithms.  

\paragraph{Adaptive sequencing vs. adaptive sampling.} The large majority of low-adaptivity algorithms use \emph{adaptive sampling}~\cite{BS18a, EN19,FMZ19,FMZ18,BBS18,BRS19,KMZLK19}, a technique introduced in~\cite{BS18a}.  These algorithms sample a large number of sets of elements at every iteration to estimate (1) the expected contribution of a random set $R$ to the current solution $S$ and (2) the expected contributions of each element $a$ to $R \cup S$. These estimates, which rely on concentration arguments, are then used to either add a random set $R$ to $S$ or  discard elements with low expected contribution to $R \cup S$. In contrast, the \emph{adaptive sequencing} technique which was recently introduced in \cite{BRS19b} generates at every iteration  a \emph{single} random sequence $(a_1, \ldots, a_{|X|})$ of the elements $X$ not yet discarded.  A prefix $A_{i^{\star}} = (a_1, \ldots, a_{i^{\star}})$ of the sequence is then added to the solution $S$, where $i^{\star}$ is the largest position $i$ such that a large fraction of the elements in $X$ have high contribution to $S \cup A_{i-1}$. Elements with low contribution to the new solution $S$ are then discarded from $X$.

The first choice we made was to use an adaptive sequencing technique rather than adaptive sampling.  

\begin{itemize}

\item  \textbf{Dependence on large polynomials in $\varepsilon$.} Adaptive sampling algorithms crucially rely on sampling and as a result their query complexity has high polynomial dependency on $\varepsilon$ (e.g. at least $\O(\epsilon^{-5}n)$ in~\cite{BRS19} and  \cite{EN19}). In contrast, adaptive \emph{sequencing} generates a \emph{single} random sequence at every iteration and we can therefore obtain an $\varepsilon$ dependence in the term that is linear in $n$ that is only $\O(\varepsilon^{-2} n)$;

\item \textbf{Dependence on large constants.} The asymptotic query complexity of previous algorithms depends on very large constants (e.g. at least $60000$ in~\cite{BRS19} and  \cite{EN19}) making them impractical.  As we tried to optimize constants for adaptive sampling, we found that due to the sampling and the requirement to maintain strong theoretical guarantees, the constants cascade and grow through multiple parts of the analysis.  In principle, adaptive \emph{sequencing} does not rely on sampling which dramatically reduces the dependency on constants. 
\end{itemize}

\paragraph{Negotiating adaptive complexity with query complexity.}  The vanilla version of our algorithm, whose description and analysis are in Appendix~\ref{sec:algone}, has at most $\varepsilon^{-2}  \log n$ adaptive rounds and  uses a total of $\varepsilon^{-2} n k$ queries to obtain a $1-1/e - \frac{3}{2}\epsilon$ approximation, without additional dependence on constants or lower order terms.  In our actual algorithm, we trade a small factor in adaptive complexity 
for a substantial improvement in query complexity.  We do this in the following manner:
%
%
\begin{itemize}
\item \textbf{Search for $\OPT$ estimates.} All algorithms with logarithmic adaptivity require a good estimate of $\OPT$, which can be obtained by running $\varepsilon^{-1} \log k$ instances of the  algorithms with different guesses of $\OPT$ in parallel, so that one guess is guaranteed to be a good approximation to $\OPT$.\footnote{\cite{FMZ19} does some preprocessing to estimate $\OPT$, but it is estimated within some very large constant.}  We accelerate this search by binary searching over the guesses of $\OPT$. A main difficulty for this binary search  is that the approximation guarantee of each solution needs to hold with high probability instead of in expectation. Even though the marginal contributions obtained from each element added to the solution only hold in expectation for adaptive sequencing, we obtain high probability guarantees for the global solution by generalizing the robust guarantees of \cite{HS17}. In the practical speedups below, we discuss how we often only need a single iteration of this binary search in practice;
\item \textbf{Search for position $i^{\star}$.} To find the position $i^{\star}$, which is the largest position $i \in [k]$ such that a large fraction of not-yet-discarded elements have high contribution to $S \cup A_{i-1}$,  the vanilla adaptive sequencing technique queries the contribution of all  elements in $X$ at each of the $k$ positions, which causes the $\O(nk)$ query complexity. Instead, similar to guessing $\OPT$, we binary search over a set of $\varepsilon^{-1} \log k$ geometrically increasing values of $i$. This improves the $\O(nk)$ dependency on $n$ and $k$ in the query complexity to $\O(n\log(\log k))$. Then, at any step of the binary search over a position $i$, instead of evaluating the contribution of all elements in $X$ to $S \cup A_{i-1}$, we only evaluate a small sample of elements. In the practical speedups below, we discuss how we can often skip this binary search for $i^{\star}$ in practice.
\end{itemize}

\paragraph{Practical speedups.}  We include several ideas which result in considerable speedups in practice without sacrificing approximation, adaptivity, or query complexity guarantees:

\begin{itemize}
\item \textbf{Preprocessing the sequence.} At the outset of each iteration of the algorithm, before searching for a prefix $A_{i^{\star}}$ to add to the solution $S$, we first use a preprocessing step that adds high value elements from the sequence to $S$. Specifically, we add to the solution $S$ all sequence elements $a_i$ that have high contribution to $S \cup A_{i-1}$. After adding these high-value elements, we discard surviving elements in $X$ that have low contribution to the new solution $S$. In the case where this step discards a large fraction of surviving elements from $X$, we can also skip this iteration's binary search for $i^{\star}$ and continue to the next iteration without adding a prefix to $S$;

\item \textbf{Number of elements added per iteration.} An adaptive sampling algorithm which samples sets of size $s$ adds at most $s$ elements to the current solution at each iteration. In contrast, adaptive sequencing and the preprocessing step described above often allow our algorithm to add a very large number of elements to the current solution at each iteration in practice;
\item \textbf{Single iteration of the binary search for $\OPT$.} Even with binary search, running multiple instances of the algorithm with different guesses of $\OPT$ is undesirable. We describe a technique which often needs only a single guess of $\OPT$. This guess is the sum $v = \max_{|S| \leq k} \sum_{a \in S} f(a)$  of the $k$ highest valued singletons, which is an upper bound on $\OPT$.  If the solution $S$ obtained with that guess $v$ has value $f(S) \geq (1-1/e)v$, then, since $v \geq \OPT$, $S$ is guaranteed to obtain a $1-1/e$  approximation and the algorithm does not need to continue the binary search. Note that with a single guess of $\OPT$, the robust guarantees for the binary search are not needed, which improves the sample complexity to $m = \frac{2 + \varepsilon}{\varepsilon^2(1 -  3\varepsilon)} \log(2\delta^{-1})$;
\item \textbf{Lazy updates.} There are many situations where lazy evaluations  of marginal contributions can be performed \cite{minoux1978accelerated, mirzasoleiman2015lazier}. Since we never discard elements from the solution $S$, the contributions of elements $a$ to $S$ are non-increasing at every iteration by submodularity. Elements with low contribution $c$ to the current solution at some iteration are ignored until the threshold $t$ is lowered to $t \leq c$. Lazy updates also accelerate the binary search over $i^{\star}$.
\end{itemize}

\section{The Algorithm}\label{sec:algorithm}

We describe the  \algfull \ algorithm (Algorithm~\ref{alg:full}). The main part of the algorithm is the \algoptimized \ subroutine (Algorithm~\ref{alg:main}), which is instantiated with different guesses of $\OPT$. These guesses $v \in V$ of $\OPT$ are geometrically increasing from $\max_{a \in N} f(a)$ to $\max_{|S| \leq k} \sum_{a \in S} f(a)$ by a $(1 - \varepsilon)^{-1}$ factor, so $V$ contains a value that is a $1- \varepsilon$ approximation to $\OPT$. The algorithm binary searches over guesses for the largest guess $v$ that obtains a solution $S$ that is a $1-1/e$ approximation to $v$.

\begin{algorithm}[H]
\caption{\algfull: the full algorithm}
\begin{algorithmic}
    	\STATE \textbf{input} function $f$, cardinality constraint $k$, parameter $\varepsilon$
    	\STATE $V \leftarrow \geom(\max_{a \in N} f(a), \max_{|S| \leq k} \sum_{a \in S} f(a), 1 - \varepsilon)$
    	\STATE $v^{\star} \leftarrow$ \textsc{Binary-Search}$(V, \max\{v \in V: f(S_v) \geq (1 - 1/e)v\})$ where
    	\STATE \ \ \ \ \ $S_v \leftarrow \algoptimized(v)$
    	\STATE \textbf{return} $S_{v^{\star}}$ 
  \end{algorithmic}
  \label{alg:full}
\end{algorithm}

 \algoptimized \  generates at every iteration a  uniformly random sequence $a_1, \ldots, a_{|X|}$ of the elements $X$ not yet discarded. After the preprocessing step which adds to $S$ elements guaranteed to have high contribution, the  algorithm  identifies a position $i^{\star}$ in this sequence which determines the prefix $A_{i^{\star} - 1}$ that is added to the current solution $S$. Position $i^{\star}$ is defined as the largest position such that there is a large fraction of elements in $X$ with high contribution to $S \cup A_{i^{\star} - 1}$. To find $i^{\star}$, we binary search over geometrically increasing positions $i \in I \subseteq [k]$. At each position $i$, we only evaluate the contributions of elements $a \in R$ , where $R$ is a uniformly random subset of $X$ of size $m$, instead of all elements $X$.  
 
 \begin{algorithm}[H]
\caption{\algoptimized: the Fast Adaptive Sequencing Technique algorithm}
\begin{algorithmic}
    	\STATE \textbf{input}  function $f$, cardinality constraint $k$, guess $v$ for $\OPT$, parameter $\varepsilon$
    	\STATE 	 $S \leftarrow \emptyset $
    	\STATE  \textbf{while} $|S| < k$ and number of  iterations  $< \varepsilon^{-1}$ \textbf{do}
    	\STATE   \ \ \ \ \  $X \leftarrow N,  t \leftarrow (1-\varepsilon) (v - f(S))/k$
	\STATE   \ \ \ \ \  \textbf{while} $X \neq \emptyset$ and $|S| < k$  \textbf{do}
		\STATE  \ \ \ \ \   \ \ \ \ \   $a_1, \ldots, a_{|X|} \leftarrow \sequence(X, |X|), A_i \leftarrow a_1, \ldots, a_i$
	\STATE \ \ \ \ \   \ \ \ \ \   $S \leftarrow S \cup \{a_i : f_{S \cup A_{i-1}}(a_i) \geq t \}$
	\STATE \ \ \ \ \   \ \ \ \ \ $X_0 \leftarrow \{a \in X : f_{S}(a) \geq t\}$
	\STATE   \ \ \ \ \   \ \ \ \ \   \textbf{if} $|X_0| \leq (1 - \varepsilon) |X|$ \textbf{then} $X \leftarrow X_0$ and continue to  next iteration
	\STATE    \ \ \ \ \   \ \ \ \ \  $R \leftarrow \sample(X, m)$, $I \leftarrow \geom(1, k - |S|, 1 - \varepsilon)$
\STATE  \ \ \ \ \  \ \ \ \ \   $i^{\star} \leftarrow$ \textsc{Binary-Search}$(I,  \max \{i  \in I :  |\left\{a \in R : f_{S \cup A_{i-1}}(a) \geq   t \right\}| \geq (1 - 2 \varepsilon) |R|\})$ 
	\STATE  \ \ \ \ \   \ \ \ \ \    $S \leftarrow S \cup A_{i^{\star} }$
    	\STATE \textbf{return} $S$
  \end{algorithmic}
  \label{alg:main}
\end{algorithm}

\subsection{Analysis}

We show that \algoptimized \ obtains a $1-1/e-\epsilon$ approximation w.p. $1- \delta$ and that it has $\tilde{O}(\varepsilon^{-2} \log n)$  adaptive complexity and $\tilde{O}(\varepsilon^{-2} n +\varepsilon^{-4} \log(n)  \log(\delta^{-1}))$ query complexity.

\begin{restatable}{rThm}{thmmain}
\label{thm:main} 
Assume $k \geq \frac{2 \log(2\delta^{-1} \ell)}{\varepsilon^2 (1 - 5\varepsilon)}$ and $\varepsilon \in (0, 0.1)$, where  $\ell = \log(\frac{\log k}{\epsilon})$. Then, \algoptimized \ with $m = \frac{2 + \varepsilon}{\varepsilon^2(1 -  3\varepsilon)} \log(\frac{4\ell\log n}{\delta \varepsilon^2})$ has at most $\varepsilon^{-2} \log(n)  \ell^2$  adaptive rounds,   
$2\varepsilon^{-2}  \ell n +  \varepsilon^{-2} \log(n) \ell^2 m$ queries, and achieves a $1 - \frac{1}{e} - 4 \varepsilon$ approximation 
w.p. $1 - \delta$.
\end{restatable}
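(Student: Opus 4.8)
The plan is to prove the three parts --- adaptive complexity, query complexity, and approximation --- after first conditioning on a global event under which every sampled set $R$ is representative of the surviving set $X$ it was drawn from. For each of the $O(\ell)$ guesses $v$ visited by the binary search over $V$, for each of the $\le\varepsilon^{-1}$ outer iterations of \algoptimized$(v)$, for each inner iteration, and for each position $i$ probed by the inner binary search over $I$, consider the event that $\big|\,|\{a\in R: f_{S\cup A_{i-1}}(a)\ge t\}|/|R|-|\{a\in X:f_{S\cup A_{i-1}}(a)\ge t\}|/|X|\,\big|\le\varepsilon$. There are $O(\varepsilon^{-2}\log(n)\,\ell^2)$ such events, and with $|R|=m$ as in the statement a Hoeffding bound plus a union bound makes all of them hold simultaneously with probability at least $1-\delta/2$; the logarithmic factor inside $m$ is exactly this union-bound count. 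Everything below is conditioned on this event, so ``a $(1-2\varepsilon)$-fraction of $R$'' can be read as ``a $(1-3\varepsilon)$-fraction of $X$'' and conversely.

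The adaptivity and query bounds are then bookkeeping once the inner loop is pinned down. The binary search over $V$ sequentially runs $O(\log|V|)=O(\ell)$ copies of \algoptimized, each with at most $\varepsilon^{-1}$ outer iterations, and, adapting the adaptive-sequencing argument of~\cite{BRS19b}, the inner loop runs $O(\varepsilon^{-1}\log n)$ times per outer iteration: a discard iteration shrinks $|X|$ by a $(1-\varepsilon)$ factor, and the preprocessing step together with the choice of $i^{\star}$ --- in particular that the element of $I$ just above $i^{\star}$ fails the $(1-2\varepsilon)|R|$ test, forcing a $(1-\varepsilon)$-fraction of $X$ to have low contribution to the current solution --- limits the iterations that do not shrink $|X|$. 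Each inner iteration spends $O(\ell)$ adaptive rounds on the binary search over $I$, so \algfull\ uses $O(\ell)\cdot\varepsilon^{-1}\cdot\varepsilon^{-1}\log(n)\cdot O(\ell)=\varepsilon^{-2}\log(n)\,\ell^2$ rounds. For queries: the preprocessing step and the computation of $X_0$ cost $O(|X|)$ each, and summing the geometrically shrinking values of $|X|$ over an outer iteration gives $O(\varepsilon^{-1}n)$, hence $2\varepsilon^{-2}\ell n$ after the $\varepsilon^{-1}$ outer iterations and $\ell$ guesses; the binary searches over $I$ evaluate $|R|=m$ marginals at each of $O(\ell)$ steps, contributing $\varepsilon^{-2}\log(n)\,\ell^2 m$.

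For the approximation, since $V$ is geometric and brackets $\OPT$ (using $\max_a f(a)\le\OPT\le\max_{|S|\le k}\sum_{a\in S}f(a)$ by submodularity), there is a guess $v'\in V$ with $(1-\varepsilon)\OPT\le v'\le\OPT$; I would show $f(S_{v'})\ge(1-1/e-O(\varepsilon))\OPT$ and that the binary search cannot return anything worse. Analyze \algoptimized$(v')$ by why it stops. \emph{(i) It stops with $|S|=k$.} Ordering the added elements consistently, a preprocessing element has marginal $\ge t$ to the current solution deterministically, and a prefix element $a_i$ with $i\le i^{\star}$ has, since the ``good fraction'' is non-increasing along the sequence, marginal $\ge t$ with probability $\ge1-3\varepsilon$ and hence expected marginal $\ge(1-3\varepsilon)t\ge(1-O(\varepsilon))(1-\varepsilon)(v'-f(S))/k$ conditioned on the past. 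This is the usual threshold-greedy recursion, giving $\E[f(S_{v'})]\ge(1-1/e-O(\varepsilon))v'$; to make it hold with probability $1-\delta/2$ I would apply a generalization of the robust guarantee of~\cite{HS17}, i.e.\ a martingale concentration of the telescoping sum of the (at most $k$) marginal gains, and this is exactly where $k\ge 2\log(2\delta^{-1}\ell)/(\varepsilon^2(1-5\varepsilon))$ is used, the $\ell$ absorbing a union bound over the guesses the binary search may visit. \emph{(ii) It stops after $\varepsilon^{-1}$ outer iterations with $|S|<k$.} Then each inner loop emptied $X$, so at the end of each outer iteration every ground-set element --- in particular every element of an optimal solution $O^{\star}$ --- has marginal below $t=(1-\varepsilon)(v'-f(S_{\mathrm{start}}))/k$; hence $\OPT-f(S)\le\sum_{a\in O^{\star}}f_S(a)<(1-\varepsilon)(v'-f(S_{\mathrm{start}}))\le(1-\varepsilon)(\OPT-f(S_{\mathrm{start}}))$ since $v'\le\OPT$, and iterating $\varepsilon^{-1}$ times shrinks the optimality gap to $e^{-1}\OPT$ deterministically, giving $f(S)\ge(1-1/e)\OPT$. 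In both cases $f(S_{v'})\ge(1-1/e-O(\varepsilon))\OPT\ge(1-1/e)v'$ once constants are absorbed, so $v'$ passes the acceptance test of \algfull\ and the returned $v^{\star}$ satisfies $v^{\star}\ge v'$; therefore $f(S_{v^{\star}})\ge(1-1/e)v^{\star}\ge(1-1/e)(1-\varepsilon)\OPT\ge(1-1/e-4\varepsilon)\OPT$. A union bound over the representativeness event and the robust-guarantee event gives the stated $1-\delta$.

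The main obstacle is case (i): upgrading the per-element in-expectation marginal guarantee of adaptive sequencing --- which is all one gets, since the contribution of a prefix element is controlled only in expectation over the random sequence --- into a high-probability bound on $f(S_v)$ that holds simultaneously for every guess the binary search might reach, because an in-expectation guarantee does not suffice to make the binary search over $\OPT$ correct. Getting the martingale argument to go through requires carefully ordering preprocessing elements and prefix elements so the telescoping recursion is valid, and handling the fact that the cutoff $i^{\star}$ itself depends on the sampled set $R$; this, together with tracking constants so as to land at exactly $4\varepsilon$, is where essentially all the work lies.
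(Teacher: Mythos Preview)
Your proposal is correct and follows the paper's proof essentially step for step: the same concentration argument for the sample $R$ with a union bound over all inner iterations, the same two-case split on why the outer loop terminates, and the same appeal to the generalized Hassidim--Singer robust guarantee (the paper's Lemma~\ref{lem:HS17}) to turn the per-element probabilistic marginal bound into a high-probability bound on $f(S_v)$ uniformly over the guesses $v\le\OPT$ visited by the outer binary search. The only noticeable difference is in the $|S|=k$ case: the paper (Lemma~\ref{lem:main}) first passes to a subset $T\subseteq S$ by discarding, at each inner iteration, the prefix elements at positions between $(1-\varepsilon)i^{\star}$ and $i^{\star}$, so that the per-element guarantee of Lemma~\ref{lem:marg2} applies to every element of $T$, and then invokes monotonicity $f(S)\ge f(T)$ with $|T|\ge(1-\varepsilon)k$; you instead claim the $1-3\varepsilon$ probability guarantee directly for every $i\le i^{\star}$, which the algorithm's definition of $i^{\star}$ as the \emph{largest} index in $I$ with $|R_i|\ge(1-2\varepsilon)|R|$, together with the monotonicity of $|X_i|$ in $i$, does support.
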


We defer the analysis to Appendix~\ref{sec:appcardinality}. The main part of it is for the approximation guarantee, which consists of two cases depending on the condition which breaks the outer-loop. Lemma~\ref{lem:eps2} shows that when there are $\varepsilon^{-1}$ iterations of the outer-loop,   the set of elements  added to  $S$ at every iteration of the outer-loop contributes $\varepsilon^{-1}(\OPT - f(S))$. Lemma~\ref{lem:marg2} shows that for the case where $|S| = k$, the expected  contribution of each element $a_i$ added to  $S$ is arbitrarily close to $(\OPT - f(S))/k$. For each solution $S_v$, we need the approximation guarantee to hold with high probability instead of in expectation to be able to binary search over guesses for \OPT, which we obtain in Lemma~\ref{lem:main} by generalizing  the robust guarantees of \cite{HS17} in Lemma~\ref{lem:HS17}. The main observation to obtain the adaptive complexity (Lemma~\ref{lem:adaptivity2}) is that, by definition of $i^{\star}$, at least  an $\varepsilon$ fraction of the surviving elements in $X$ are discarded at every iteration with high probability.\footnote{To obtain the adaptivity $r$ with probability $1$ and the approximation guarantee w.p. $1-\delta$, the algorithm declares failure  after $r$ rounds and accounts for this failure probability in $\delta$.} For the query complexity (Lemma~\ref{lem:query2}), we note that there are  $|X| + m \ell$ function evaluations per iteration.


\section{Experiments}\label{sec:experiments}
Our goal in this section is to show that in practice, \algoptimized \ finds solutions whose value meets or exceeds alternatives in less parallel runtime than both state-of-the-art low-adaptivity algorithms and \textsc{Lazier-than-Lazy-Greedy}. 
To accomplish this, we build optimized parallel MPI implementations of \algoptimized, other  low-adaptivity algorithms, and \textsc{Lazier-than-Lazy-Greedy}, which is widely regarded as the fastest algorithm for submodular maximization in practice. We then use $95$ Intel Skylake-SP $3.1$ GHz processors on AWS to compare the algorithms' runtime over a variety of objectives defined on $8$ real and synthetic datasets. We measure runtime using a rigorous measure of parallel time (see Appendix \ref{ssec:measure_ptime}). Appendices \ref{ssec:experiments_algos}, \ref{ssec:datasets}, \ref{ssec:implementations}, and \ref{ssec:hardware} contain detailed descriptions of the benchmarks, objectives, implementations, hardware, and experimental setup on AWS. 


We conduct two sets of experiments. The first set compares \algoptimized \ to previous low-adaptivity algorithms. Since these algorithms  all have practically intractable sample complexity, we grossly reduce their sample complexity to only $95$ samples per iteration so that each processor performs a single function evaluation per iteration. This reduction, which we discuss in detail below, gives these algorithms a large runtime advantage over \algoptimized, which computes its full theoretical sample complexity in these experiments. This is practically feasible for \algoptimized \ because \algoptimized \ samples \emph{elements}, not \emph{sets} of elements like other low-adaptivity algorithms. Despite the large advantage this setup gives to the other low-adaptivity algorithms, \algoptimized \ is consistently one to three orders of magnitude faster.

The second set of experiments compares \algoptimized \ to \textsc{Parallel-Lazier-than-Lazy-Greedy} (\textsc{Parallel-LTLG}). We scale up the $8$ objectives to be defined over synthetic data with $n=100000$ and real data with up to $n=26000$ with $k$ from $25$ to $25000.$ We find that \algoptimized \ is consistently $1.5$ to $27$ times faster than \textsc{Parallel-LTLG} and that its runtime advantage increases in $k$. These fast relative runtimes are a loose lower bound on \algoptimized's performance advantage, as \algoptimized \ can reap additional speedups by adding up to $n$ processors, whereas \textsc{Parallel-LTLG} performs at most $n\log(\epsilon^{-1})/k$ function evaluations per iteration, so using over $95$ processors often does not help. In Section \ref{ssec:resultsLTLG}, we show that on many objectives \algoptimized \ is faster even with only a single processor.

\subsection{Experiment set 1: \algoptimized \ vs. low-adaptivity algorithms}
\label{ssec:adaptive_experiments}
Our first set of experiments compares \algoptimized \ to state-of-the-art low-adaptivity algorithms. To accomplish this, we built optimized parallel MPI versions of each of the following algorithms: \textsc{Randomized-Parallel-Greedy}~\cite{chekuri2018submodular}, \textsc{Exhaustive-Maximization}~\cite{FMZ19}, and \textsc{Amortized-Filtering}~\cite{BRS19}.  For any given $\varepsilon>0$ all these algorithms achieve a $1-1/e-\varepsilon$ approximation in $\O(\poly(\varepsilon^{-1})\log n)$ rounds.  For calibration, we also ran (1) \textsc{Parallel-Greedy}, a parallel version of the standard \textsc{Greedy} algorithm as a heuristic upper bound for the objective value, as well as (2) \textsc{Random}, an algorithm that simply selects $k$ elements uniformly at random. 

A fair comparison of  the algorithms' parallel runtimes and solution values is to run each algorithm with parameters that yield the same guarantees, for example a $1-1/e - \epsilon$ approximation w.p. $1 - \delta$ with $\varepsilon=0.1$ and $\delta=0.05$.  However, this is infeasible since the other low-adaptivity algorithms all require a practically intractable number of queries to achieve any reasonable guarantees, e.g.,  every round of \textsc{Amortized-Filtering} would require at least  $10^8$ samples, even with $n = 500$. 


\paragraph{Dealing with practically intractable query complexity of benchmarks.} To run other low-adaptivity algorithms despite their huge sample complexity we made two major modifications:

\begin{enumerate}
\item \textbf{Accelerating subroutines.} We optimize each of the three other low-adaptivity benchmarks by implementing parallel binary search to replace brute-force search and several other modifications that reduce unnecessary queries (for a full description of these fast implementations, see Appendix~\ref{sec:fast_as}). These optimizations result in speedups that reduce their runtimes by an order of magnitude in practice, and these optimized implementations are publicly available in our code base. Despite this, it remains practically infeasible to compute these algorithms' high number of samples in practice even on small problems (e.g. $n=500$ elements);

\begin{figure}
\centering
\includegraphics[height=0.45\textwidth]{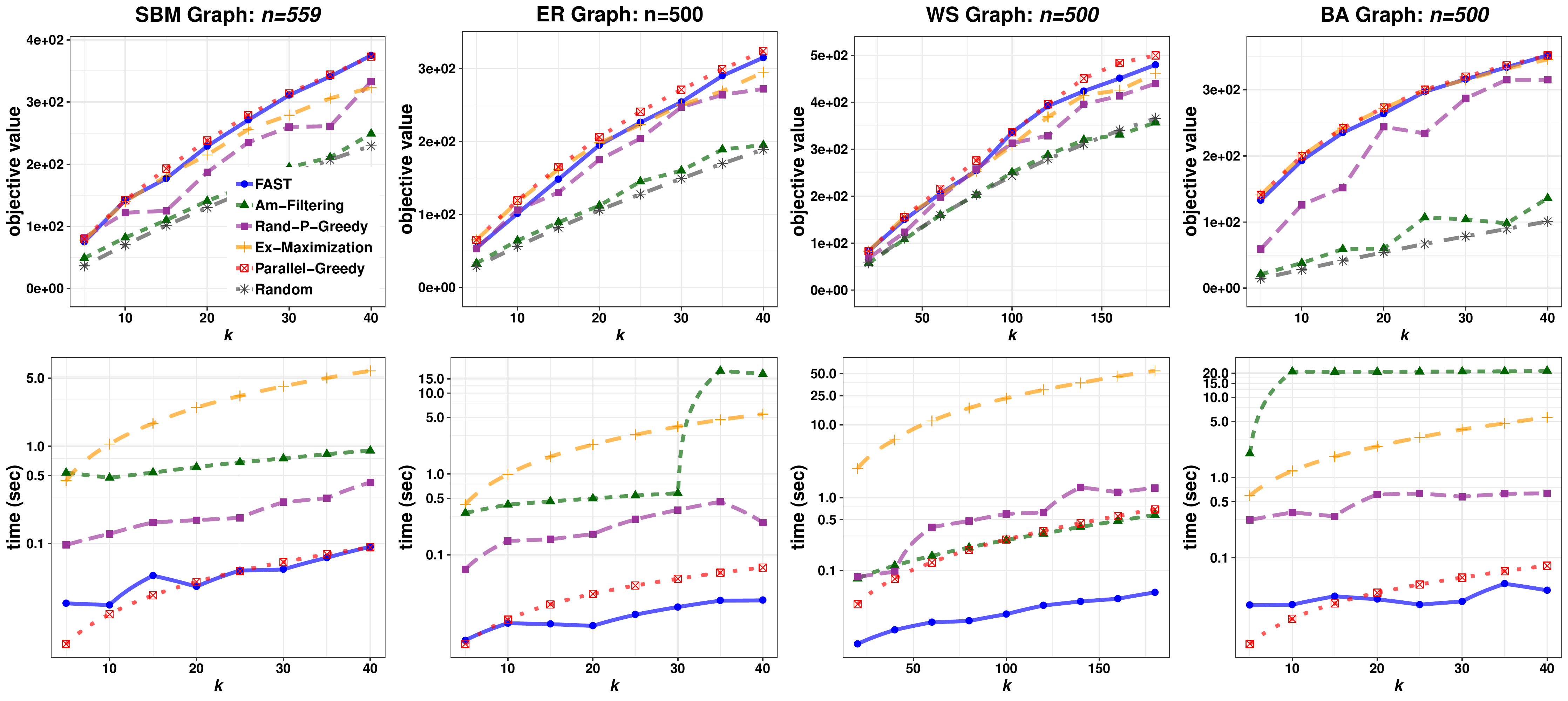}
\caption{\textit{Experiment Set 1.a: \algoptimized \ vs. low-adaptivity algorithms on graphs (time axis\textbf{ log-scaled})}.}
\label{fig:adaptive_rand_graphs}
\end{figure}

\item \textbf{Using a single query per processor.} Since our interest is in comparing runtime and not quality of approximation, we dramatically lowered the number of queries the three benchmark algorithms require to achieve their guarantees. 
Specifically, we set the parameters $\varepsilon$ and $\delta$ for both \algoptimized \ and the three low-adaptivity benchmarks such that all algorithms guarantee the same $1- 1/e - 0.1$ approximation with probability $0.95$ (see Appendix~\ref{app:epsilons}). However, for the low-adaptivity benchmarks, we reduce their theoretical  sample complexity in each round to have exactly \textbf{one} sample per processor (instead of their large sample complexity, e.g. $10^8$ samples needed for \textsc{Amortized-Filtering}). This reduction in the number of samples per round allows the benchmarks to have each processor perform a single function evaluation per round instead of e.g. $10^8/95$ functions evaluations per processor per round, which `unfairly' accelerates their runtimes at the expense of their approximations. However, we do \emph{not} perform this reduction for \algoptimized. Instead, we require \algoptimized \ to compute the \emph{full count} of samples for its guarantees. This is feasible since \algoptimized \ samples elements and not sets.
\end{enumerate}

\paragraph{Data sets.} Even with these modifications, for tractability we could only use small data sets: 
 
 \begin{itemize}
     \item \textbf{Experiments 1.a: synthetic data sets ($n\approx 500$).} To compare the algorithms' runtimes under a range of conditions, we solve max cover on synthetic graphs generated via four different well-studied graph models: \emph{Stochastic Block Model} (SBM); \emph{Erd\H{o}s R\'{e}nyi} (ER); \emph{Watts-Strogatz} (WS); and \emph{Barb\'{a}si-Albert} (BA). See Appendix~\ref{ssec:random_graphs} for additional details;
     \item \textbf{Experiments 1.b: real data sets ($n\approx 500$).} To compare the algorithms' runtimes on real data, we optimize \emph{Sensor Placement} on California roadway traffic data; \emph{Movie Recommendation} on MovieLens data; \emph{Revenue Maximization} on YouTube Network data; and \emph{Influence Maximization} on Facebook Network data. See Appendix~\ref{sec:real_data} for additional details.
 \end{itemize}


\begin{figure}
\centering
\includegraphics[height=0.45\textwidth]{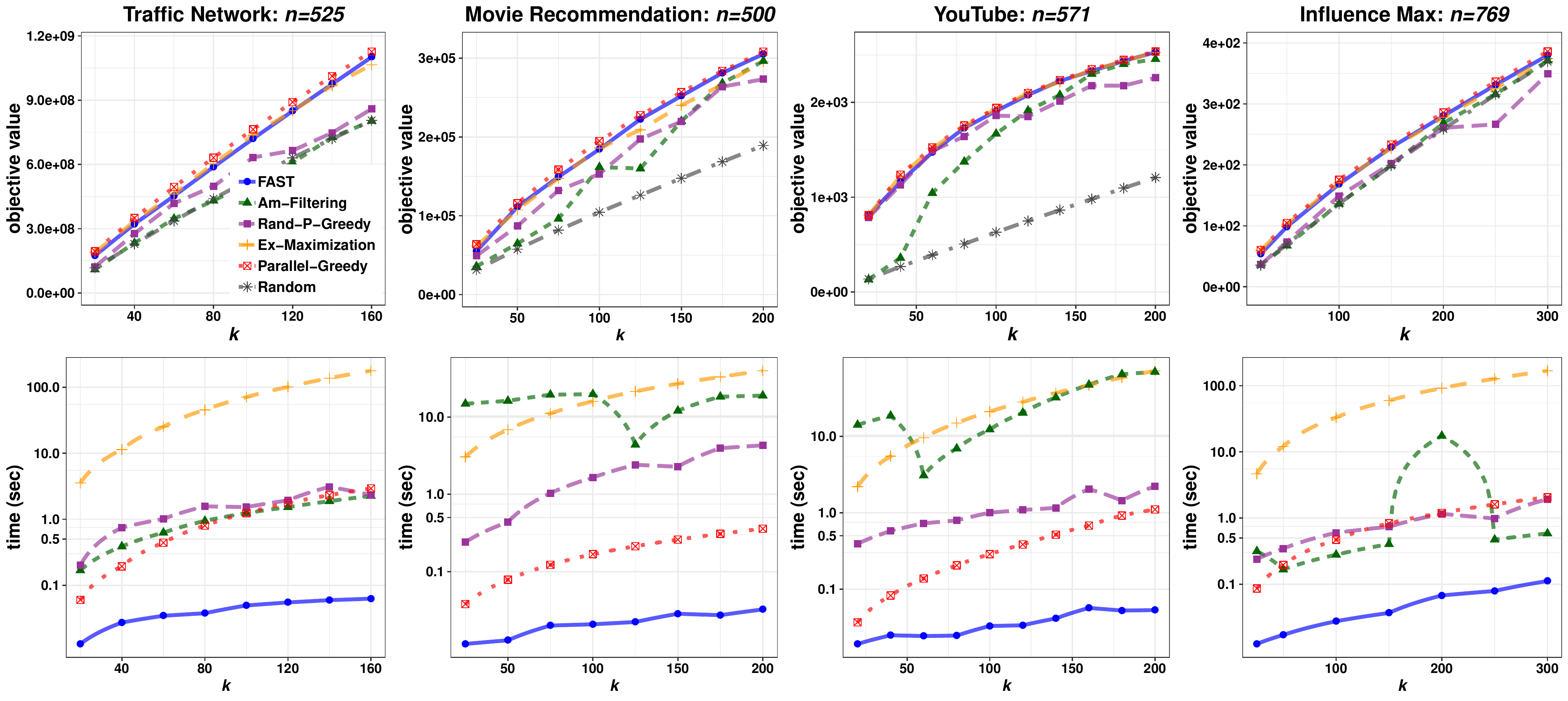}
\caption{\textit{Experiment Set 1.b: \algoptimized \ vs. low-adaptivity algorithms on real data (time axis\textbf{ log-scaled})}.}
\label{fig:adaptive_real_data}
\end{figure}

\paragraph{Results of experiment set 1.} \label{ssec:resultsLTLG} Figures \ref{fig:adaptive_rand_graphs} and \ref{fig:adaptive_real_data} plot all algorithms' solution values and parallel runtimes on synthetic and real data. In terms of solution values, across all experiments, values obtained by \algoptimized \ are nearly indistinguishable from values obtained by \textsc{Greedy}---the heuristic upper bound. From this comparison, it is clear that \algoptimized \ does not compromise on the values of its solutions.  In terms of runtime, \algoptimized \ is $18$ to $2800$ times faster than \textsc{Exhaustive-Maximization}; $3$ to $145$ times faster than \textsc{Randomized-Parallel-Greedy}; and $8$ to $1300$ times faster than \textsc{Amortized-Filtering} on the $8$ objectives and various $k$ (the time axes of Figures \ref{fig:adaptive_rand_graphs} and \ref{fig:adaptive_real_data} are log-scaled). We emphasize that \algoptimized's faster runtimes were obtained despite the fact that the three other low-adaptivity algorithms used only a single sample per processor at each of their iterations. 


\subsection{Experiment set 2: \algoptimized \ vs. Parallel-Lazier-than-Lazy-Greedy}
Our second set of experiments compares \algoptimized \ to a parallel version of \textsc{Lazier-than-Lazy-Greedy} (\textsc{LTLG})~\cite{mirzasoleiman2015lazier}, which is widely regarded as the fastest algorithm for submodular maximization in practice. Specifically, we build an optimized, scalable, truly parallel MPI implementation of \textsc{LTLG} which we refer to as \textsc{Parallel-LTLG} (see Appendix~\ref{sec:pltlg}). 

This allows us to scale up to random graphs with $n\approx100000$, large real data with $n$ up to $26000$, and various $k$ from $25$ to $25000$.  For these large experiments, running the parallel \textsc{Greedy} algorithm is impractical. \textsc{LTLG} has a $(1- 1/e - \varepsilon)$ approximation guarantee in expectation, so we likewise set both algorithms' parameters $\varepsilon$ to guarantee a $(1- 1/e - 0.1)$ approximation in expectation (see Appendix~\ref{app:epsilons}).


\begin{figure}
\centering
\includegraphics[height=0.45\textwidth]{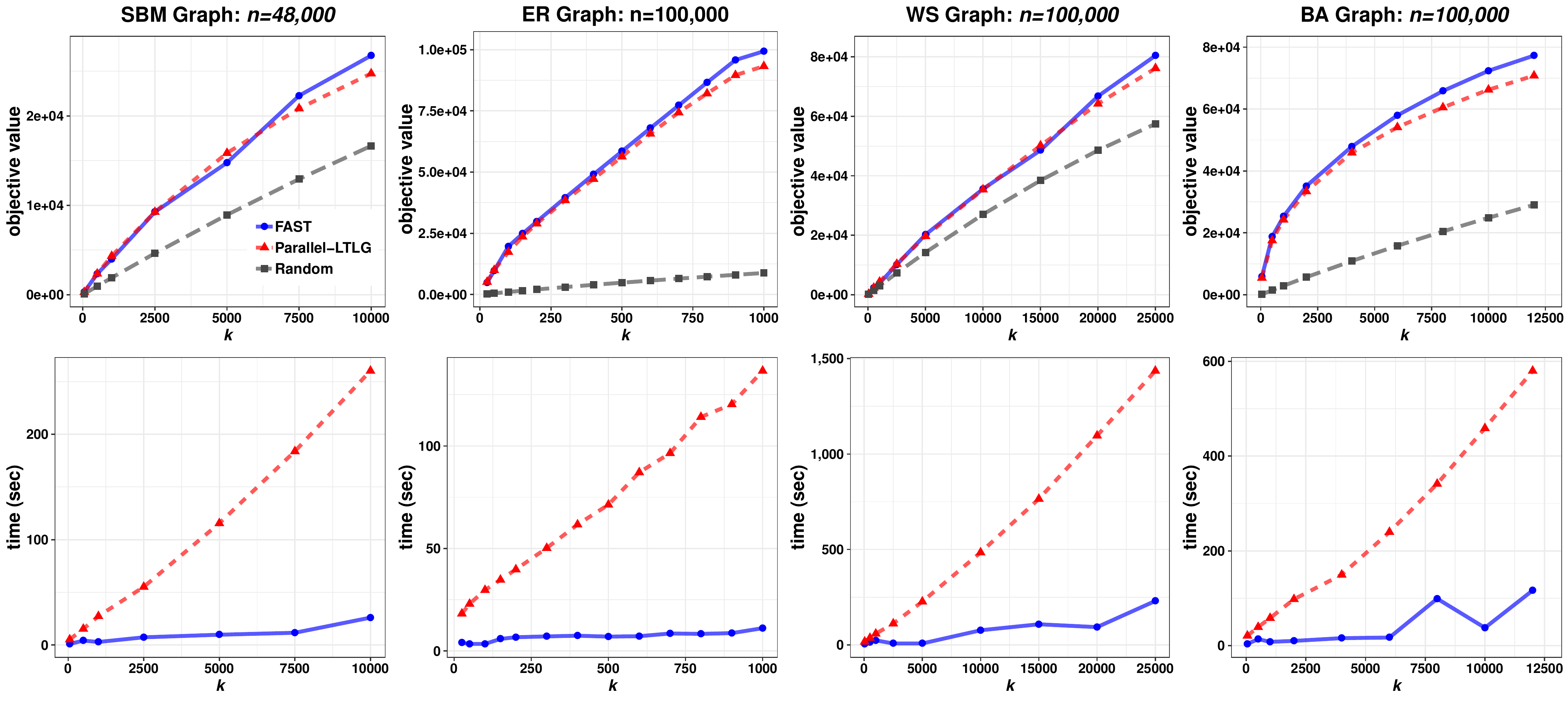}
\caption{\textit{Experiment Set 2.a:  \algoptimized \ (blue) \ vs. \textsc{Parallel-LTLG} (red) on graphs.}}
\label{fig:LTLG_rand_graphs}
\end{figure}

\begin{figure}
\centering
\includegraphics[height=0.45\textwidth]{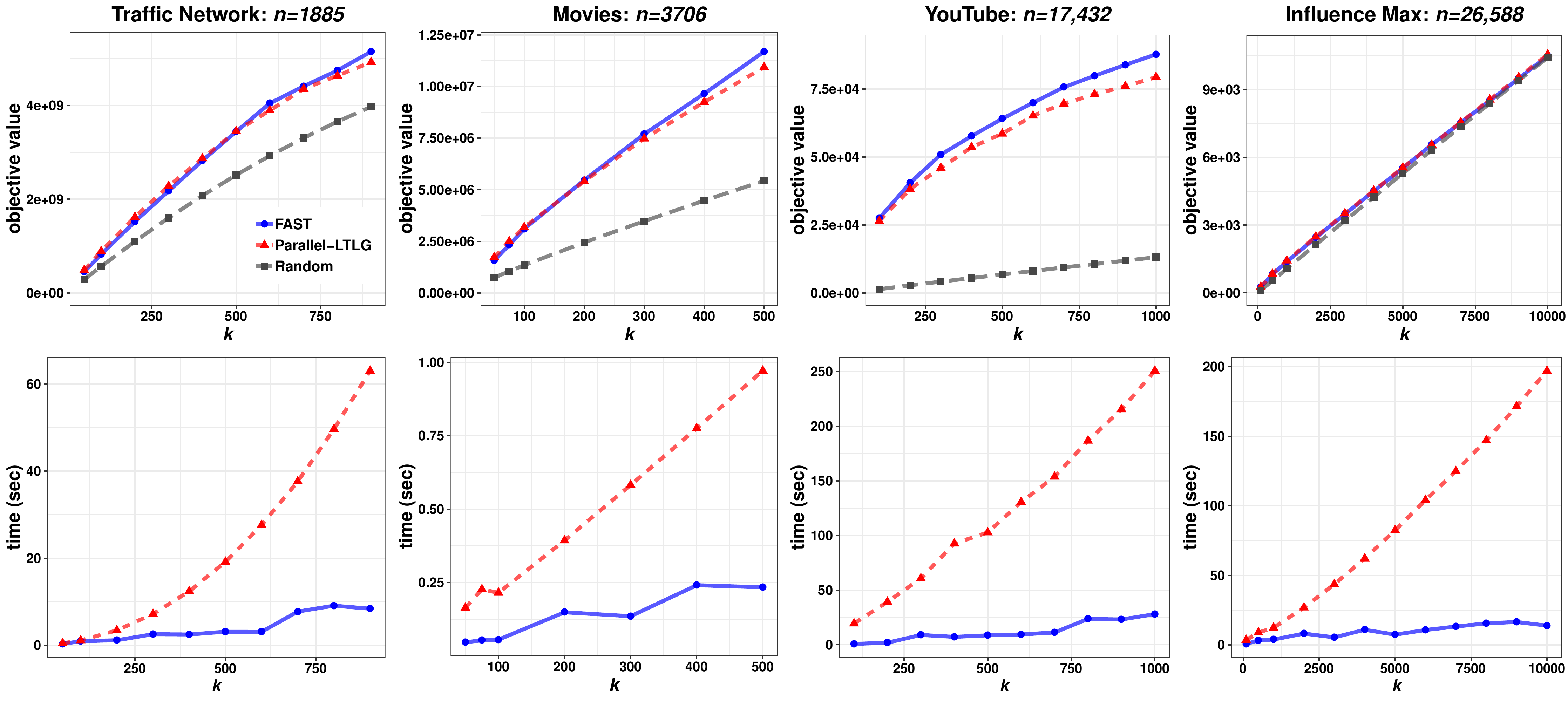}
\caption{\textit{Experiment Set 2.b:  \algoptimized \ (blue) \ vs. \textsc{Parallel-LTLG} (red) on real data.}}
\label{fig:LTLG_real_data}
\end{figure}


\label{ssec:resultsLTLGexp}
\paragraph{Results of experiment set 2.} 
 Figures \ref{fig:LTLG_rand_graphs} and \ref{fig:LTLG_real_data} plot solution values and runtimes for large experiments on synthetic and real data.  In terms of solution values, while the two algorithms achieved similar solution values across all 8 experiments, \algoptimized \ obtained slightly higher solution values than \textsc{Parallel-LTLG} on the majority of objectives and values of $k$ we tried. 
 
  In terms of runtime, \algoptimized \ was $1.5$ to $27$ times faster than \textsc{Parallel-LTLG} on each of the $8$ objectives and all $k$ we tried from $k=25$ to $25000$. More importantly, runtime disparities between \algoptimized \ and \textsc{Parallel-LTLG} increase in larger $k$, so larger problems exhibit even greater runtime advantages for \algoptimized. 
  
  Furthermore, we emphasize that due to the fact that the sample complexity of \textsc{Parallel-LTLG} is less than 95 for many experiments, it cannot achieve better runtimes by using more processors, whereas \algoptimized \ can leverage up to $n$ processors to achieve additional speedups. Therefore, \algoptimized's fast relative runtimes are a loose lower bound for what can be obtained on larger-scale hardware and problems. Figure~\ref{fig:speedup} plots \algoptimized's parallel speedups versus the number of processors we use. 
  
  Finally, we note that \emph{even when running the algorithms on a single processor}, \algoptimized \ is faster than \textsc{LTLG} for reasonable values of $k$ on $7$ of the $8$ objectives due to the fact that \algoptimized \ often uses fewer queries (see Appendix \ref{ssec:LTLGqueries}). For example, Figure~\ref{fig:speedup} plots single processor runtimes for the YouTube experiment.

\begin{figure}
  \centering
    \includegraphics[height=0.250\textwidth]{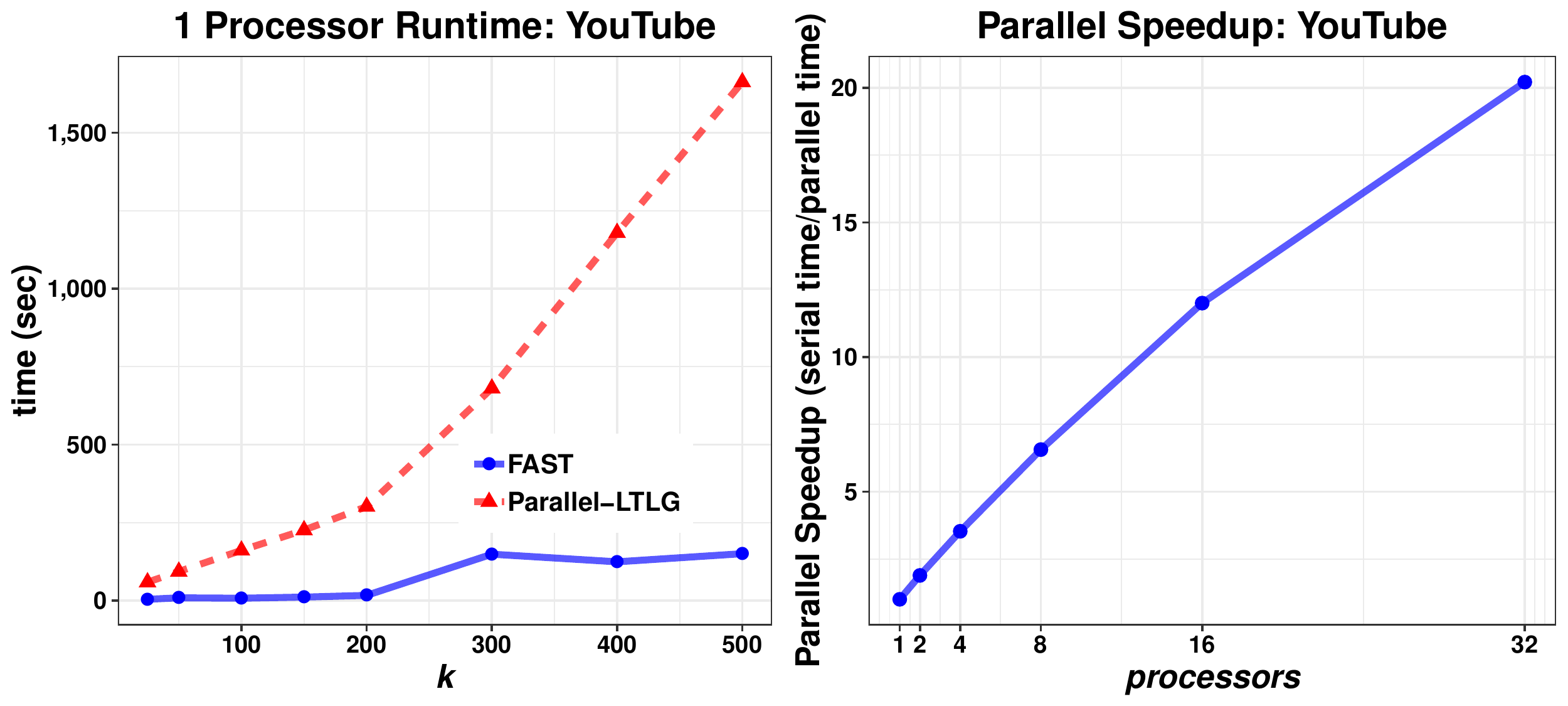}
\caption{\textit{Single processor runtimes  for \algoptimized \ and  \textsc{Parallel-LTLG},  and parallel speedups vs. number of processors for \algoptimized \ for the YouTube experiment.}}
\label{fig:speedup}
\end{figure}

\section*{Acknowledgments}
This research was supported by NSF grant 1144152, NSF grant 1647325, a Google PhD Fellowship, NSF grant CAREER CCF 1452961, NSF CCF 1816874, BSF grant 2014389, NSF USICCS proposal 1540428, a Google Research award, and a Facebook research award.

\newpage


\bibliographystyle{alpha}
\bibliography{biblio}


\newpage 

\appendix

\section*{Appendix}

\section{Comparison of Query and Adaptive Complexity with Previous Work}
\label{sec:appcomparison}

We compare the query and adaptive complexity achieved by \algoptimized \ to previous work. As mentioned in the introduction, \algoptimized \ improves the $\varepsilon$ dependency in the linear term of the query complexity of at least $\O(\varepsilon^{-5}n)$ in~\cite{BRS19, EN19} and $\O(\varepsilon^{-3}n)$ in~\cite{FMZ19} to $\O(\varepsilon^{-2} n)$.  We provide the first non-asymptotic bounds on the query and adaptive complexity  of an algorithm with sublinear adaptivity, showing dependency on small constants.  In Table~\ref{tab:queries}, we compare our (asymptotic) bounds to the query and adaptive complexity achieved in previous work.

\begin{table}[h]
\begin{center}
\begin{tabular}{lllllll}\vspace{0.2cm}
        &              & \emph{Query complexity} & \emph{Adaptivity} &   \\\ \vspace{0.3cm}
 \textsc{Amortized-Filtering} & \cite{BRS19}        &  $\mathcal{O}\left( \frac{nk^2}{ \varepsilon^{5}}  \log^2(n) \log\left(\frac{n}{\delta}\right)\right)  $  & $\mathcal{O}\left( \frac{\log n }{  \varepsilon^{2}} \right)    $  & \\\vspace{0.3cm}
 \textsc{ Exhaustive-Maximization} & \cite{FMZ19}    &  $\mathcal{O}\left(\frac{n }{ \varepsilon^{3}}+ \frac{\log\left(\delta^{-1}n\right)}{ \varepsilon^{6}} \log\left(\frac{\log n}{\delta \varepsilon^2}\right)\right)$  & $\mathcal{O}\left( \frac{\log\left(\frac{n}{\delta \varepsilon^{2} }\right)}{\varepsilon^{2}} \right) $ & \\\ \vspace{0.3cm}
 \textsc{Randomized-Parallel-Greedy}& \cite{chekuri2018submodular}& $\mathcal{O}\left(  \frac{n}{\varepsilon^4}  \log\left(\frac{n}{\delta}\right) \log^2 n\right)  $ & $ \mathcal{O}\left(\frac{\log n}{\varepsilon^2}  \right)$ & \\\ \vspace{0.3cm}
  \textsc{Fast}     & &     $\mathcal{O}\left(\frac{  n \ell}{\varepsilon^2} +  \frac{ \ell^2 \log n}{\epsilon^4}   \log(\frac{\ell\log n}{\delta \varepsilon^2})\right)$        &   $\mathcal{O}\left(\frac{ \ell^2 \log n}{\varepsilon^2}\right)$   & 
\end{tabular}
  \caption{Comparison on the query complexity and adaptivity achieved in previous work and in this paper in order to obtain a $1-1/e-\epsilon$ approximation with probability $1 - \delta$, where  $\ell = \log(\frac{\log k}{\epsilon})$.  }
  \label{tab:queries}
\end{center}
\end{table}


\section{Vanilla \algone}
\label{sec:algone}

We begin by describing a simplified version of the algorithm.  This algorithm is $\varepsilon^{-2}  \log n$ adaptive (without additional dependence on constants), uses a total of $\varepsilon^{-2} n k$ queries (again, no additional constants), and obtains a $1-\frac{1}{e} - \frac{3}{2} \varepsilon$ approximation in expectation. Importantly, it assumes the value of the optimal solution \OPT \ is known.  The full algorithm is an optimized version which does not assume \OPT \ is known and improves the query complexity.

\subsection{Description of \algone}  \algone, formally described below as Algorithm~\ref{alg:1},  generates at every iteration a  random sequence $a_1, \ldots, a_k$ of elements that is used to both add elements to the current solution and discard elements from further consideration. More precisely, each element $a_i$, for $i \in [k]$, in $\sequence(X, k)$ is a uniformly random element from the set  of surviving elements $X$, which initially contains all elements. The algorithm  identifies a position $i^{\star}$ in this sequence which determines the elements $a_1, \ldots, a_{i^{\star}-1}$ that are added to the current solution $S$, as well as the elements $a \in X$ with low contribution to $S \cup \{a_1, \ldots, a_{i^{\star}-1}\}$ that are discarded from $X$. This position $i^{\star}$ is defined to be the smallest position such that there is at least an $\varepsilon$ fraction of elements in $X$ with low contribution to $S \cup \{a_1, \ldots, a_{i^{\star}-1}\}$. By the minimality of $i^{\star}$, we simultaneously obtain that (1) the elements  added to $S$, which are the elements before position $i^{\star}$, are likely to contribute high value to the solution  and (2) at least  an $\varepsilon$ fraction of the surviving elements have low contribution to  $S \cup \{a_1, \ldots, a_{i^{\star}-1}\}$ and are discarded. 

The algorithm iterates until there are no surviving elements left in $X$. It then lowers the threshold $t$ between high and low contribution and reinitializes the surviving elements $X$ to be all elements. The algorithm lowers the threshold at most $\varepsilon^{-1}$ times and then returns the solution $S$ obtained. 

    \begin{algorithm}[H]
\caption{\algone}
\begin{algorithmic}
    	\STATE \textbf{input}  function $f$, cardinality constraint $k$, parameter $\varepsilon$, value of optimal solution $\OPT$
    	\STATE   $S \leftarrow \emptyset $
    	\STATE \textbf{while} $|S| < k$ and number of  iterations  $< \varepsilon^{-1}$ \textbf{do}
    	\STATE  \ \ \ \ \   $X \leftarrow N,  t \leftarrow (1-\varepsilon) (\OPT - f(S))/k$
	\STATE \ \ \ \ \   \textbf{while} $X \neq \emptyset$ and $|S| < k$ \textbf{do}
	\STATE \ \ \ \ \   \ \ \ \ \  $a_1, \ldots, a_k  \leftarrow \sequence(X, k)$
\STATE \ \ \ \ \   \ \ \ \ \   $i^{\star} \leftarrow \min \{i \in \{1, \ldots, k\} :  |X_i| \leq (1 - \varepsilon) |X|\}$
\STATE \ \ \ \ \   \ \ \ \ \  \ \ \ \ \   with $X_i \leftarrow \left\{a \in X : f_{S \cup \{a_1, \ldots, a_{i-1}\}}(a) \geq   t\right\}$
	\STATE \ \ \ \ \   \ \ \ \ \    $S \leftarrow S \cup \{a_1, \ldots, a_{i^{\star} - 1}\}$
	\STATE \ \ \ \ \   \ \ \ \ \  $X \leftarrow  X_{i^{\star}}$
    	\STATE \textbf{return} $S$ 
  \end{algorithmic}
  \label{alg:1}
\end{algorithm}

 
\subsection{Analysis of \algone}
\subsubsection{The adaptive complexity and query complexity}  The main observation to bound the number of iterations of the algorithm is that, by definition of $i^{\star}$, at least  an $\varepsilon$ fraction of the surviving elements in $X$ are discarded at every iteration. Since the queries at every iteration of the inner-loop can be evaluated in parallel, the adaptive complexity  is the total number of iterations of this inner-loop. For the query complexity, we note that there are  $|X|k$ function evaluations per iteration. 

\begin{restatable}{rLem}{lemadaptivity}
\label{lem:adaptivity}
The adaptive complexity of \algone \ is at most $\varepsilon^{-2} \log n$. Its query complexity is at most $\varepsilon^{-2} nk$.
\end{restatable}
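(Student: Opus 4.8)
The plan is to bound separately the number of iterations of the inner-loop (which equals the adaptive complexity, since everything within one inner-loop pass is parallelizable) and the total number of function evaluations per inner-loop iteration.

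\textbf{Bounding the adaptive complexity.} First I would fix one pass of the outer-loop, i.e. one value of the threshold $t$, and bound the number of inner-loop iterations before $X$ becomes empty. The key observation, as the excerpt notes, is the definition of $i^\star$: it is the \emph{smallest} position $i$ with $|X_i| \le (1-\varepsilon)|X|$. After updating $X \leftarrow X_{i^\star}$, we have $|X| \le (1-\varepsilon)|X|$ relative to the previous $|X|$; so each inner-loop iteration shrinks $|X|$ by a factor of at least $(1-\varepsilon)$. (One should check the edge case where no such $i^\star \le k$ exists; in that case either the loop terminates because $|S|=k$, or one argues such a position always exists under the running assumptions — I'd look at how $X_k$ behaves. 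Actually, if no position qualifies then fewer than an $\varepsilon$ fraction have low contribution at every prefix, meaning $\ge k$ elements get added, so $|S|$ reaches $k$ and the loop exits.) Starting from $|X| = n$, after $j$ iterations $|X| \le n(1-\varepsilon)^j < n e^{-\varepsilon j}$, which is below $1$ once $j > \varepsilon^{-1}\log n$. Hence each outer-loop pass contributes at most $\varepsilon^{-1}\log n$ inner iterations. Since the outer-loop runs at most $\varepsilon^{-1}$ times (it lowers the threshold at most $\varepsilon^{-1}$ times), the total number of inner-loop iterations, and therefore the adaptive complexity, is at most $\varepsilon^{-1} \cdot \varepsilon^{-1}\log n = \varepsilon^{-2}\log n$.

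\textbf{Bounding the query complexity.} Within a single inner-loop iteration, the algorithm draws a sequence $a_1,\dots,a_k$ and must determine $i^\star$, which requires knowing $|X_i|$ for the relevant positions $i$; computing all the sets $X_i$ amounts to evaluating $f_{S \cup \{a_1,\dots,a_{i-1}\}}(a)$ for every $a \in X$ and every $i \in [k]$, i.e. at most $|X|\,k$ function evaluations, and $|X| \le n$. Multiplying by the bound $\varepsilon^{-2}\log n$ on the number of iterations would give $\varepsilon^{-2} n k \log n$, which is worse than the claimed $\varepsilon^{-2} n k$ by a $\log n$ factor — so the naive product is too lossy. The fix is to amortize the $|X|$ factor across iterations \emph{within} an outer-loop pass: since $|X|$ shrinks geometrically by $(1-\varepsilon)$ each inner iteration, the sum of $|X|$ over all inner iterations in one outer pass is at most $n + n(1-\varepsilon) + n(1-\varepsilon)^2 + \cdots \le n/\varepsilon$. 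Thus one outer pass uses at most $(n/\varepsilon)\cdot k$ queries, and over the at most $\varepsilon^{-1}$ outer passes we get $\varepsilon^{-1}\cdot \varepsilon^{-1} n k = \varepsilon^{-2} n k$ total queries, as claimed.

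\textbf{Main obstacle.} The genuinely delicate points are (i) confirming that $i^\star$ is always well-defined within $[k]$ or else that the loop terminates for the right reason (so that the "shrink by $(1-\varepsilon)$" step actually applies every iteration that does not terminate), and (ii) being careful with the amortization so as not to lose the spurious $\log n$ factor in the query bound — the geometric-series argument over $|X|$ rather than a crude per-iteration-times-iteration-count bound is what makes the stated $\varepsilon^{-2}nk$ tight. Everything else is a routine geometric-decay computation; no probabilistic argument is needed here since these bounds are deterministic (the randomness of the sequence only matters for the approximation guarantee, not for adaptivity or query count).
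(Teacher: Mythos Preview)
Your proposal is correct and follows essentially the same approach as the paper: the geometric shrinkage of $|X|$ by a factor $(1-\varepsilon)$ per inner iteration gives the $\varepsilon^{-1}\log n$ bound per outer pass, and for queries the paper likewise sums the geometric series $\sum_j k(1-\varepsilon)^j n$ over inner iterations (times $\varepsilon^{-1}$ outer passes) rather than using the crude per-iteration bound. Your explicit treatment of the edge case where $i^\star$ might not exist in $[k]$ is a detail the paper leaves implicit.
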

\begin{proof} We first analyze the adaptive complexity and then the query complexity.

\paragraph{The adaptive complexity.} The algorithm consists of an outer-loop and an inner-loop. We first argue that at any iteration of the outer-loop,  there are at most $\varepsilon^{-1} \log n$ iterations of the inner loop. By definition of $i^{\star}$, we have that $|X_{i^{\star}}| \leq (1 - \varepsilon) |X|$. Thus there is at least an $\varepsilon$ fraction of the elements in $X$ that are discarded at every iteration. The inner-loop terminates when $|X| = 0$, which occurs at the latest at  iteration $i$  where
$(1 - \varepsilon)^i n < 1.$ This implies that there are at most $\varepsilon^{-1} \log n$ iterations of the inner loop. The function evaluations inside an iteration of the inner-loop are non-adaptive and can be performed in parallel in one round. These are the only function evaluations performed by the algorithm.\footnote{The value of $f(S)$ needed to compute $t$ can be obtained using $f_{S \cup \{a_1, \ldots,  a_{i^{\star}- 1}\}}(a_{i^{\star}})$ that was computed in the previous iteration.}  Since there are at most $\varepsilon^{-1}$ iterations of the outer-loop, there are at most  $\varepsilon^{-2} \log n$ rounds of parallel function evaluations.

\paragraph{The query complexity.} In the inner-loop, the algorithm evaluates the marginal contribution of each element $a \in X$ to $S \cup \{a_1, \ldots, a_{i}\}$ for all $i \in \{0, \ldots, k-1\}$, so a total of $k |X|$ function evaluations. Similarly as for Lemma~\ref{lem:adaptivity}, at any iteration of the outer-loop, there are at most $\varepsilon^{-1} \log n$ iterations of the inner-loop and we have $|X| \leq (1 - \varepsilon)^j n$ at iteration $j$. We conclude that the query complexity is
 $\frac{1}{\varepsilon} \sum_{j=1}^{\frac{ \log n}{\varepsilon}}k (1 - \varepsilon)^j n <  \frac{nk}{\varepsilon^2}.$
\end{proof}

\subsubsection{The approximation guarantee}   There are two cases depending on the condition which breaks the outer-loop. The main lemma for the case where there are $\varepsilon^{-1}$ iterations of the outer-loop  is that at every iteration, the elements  added to  $S$ contribute an $\varepsilon$ fraction of the remaining value $\OPT - f(S)$. 

\begin{restatable}{rLem}{lemeps}
\label{lem:eps} 
Let $S_i$ be the current solution $S$ at the start of  iteration $i$ of the outer-loop of \algone. For any $i$, if $|S_{i+1}| < k$, then $f_{S_i}(S_{i+1}) \geq \varepsilon (\OPT - f(S_i)).$
\end{restatable}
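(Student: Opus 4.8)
I would begin by unpacking the definitions involved in a single iteration $i$ of the outer-loop. Fix the iteration; write $X$ for the set of surviving elements at a given iteration of the inner-loop, $S$ for the solution at that point, and recall that the threshold is $t = (1-\varepsilon)(\OPT - f(S_i))/k$, held fixed throughout the inner-loop since $f(S)$ in the formula refers to $f(S_i)$. The key structural fact is the choice of $i^\star$ as the \emph{smallest} position with $|X_{i^\star}| \le (1-\varepsilon)|X|$: by minimality, for every $i < i^\star$ we have $|X_i| > (1-\varepsilon)|X|$, i.e.\ strictly more than a $(1-\varepsilon)$-fraction of elements of $X$ still have marginal contribution $\ge t$ to $S \cup \{a_1,\dots,a_{i-1}\}$. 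Since $a_i$ is drawn uniformly at random from $X$ (and the randomness of $a_i$ is independent of the event determining $X_i$, conditioned on the prefix), this means $\Pr[f_{S\cup\{a_1,\dots,a_{i-1}\}}(a_i) \ge t] > 1-\varepsilon$ for each $i < i^\star$. Hence each element added to $S$ before position $i^\star$ has expected marginal contribution at least $(1-\varepsilon) t$.

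**Chaining the per-element bounds into a per-iteration bound.**
Next I would sum over the positions added in this inner-loop step and then over all inner-loop steps within outer-iteration $i$. Let the elements added to $S$ during the whole of outer-iteration $i$ be $e_1, \dots, e_m$ in the order they are added (so $S_{i+1} = S_i \cup \{e_1,\dots,e_m\}$ and, in the regime $|S_{i+1}| < k$, no inner-loop step was truncated by the cardinality constraint). By the telescoping of marginal contributions,
\[
f_{S_i}(S_{i+1}) \;=\; \sum_{j=1}^{m} f_{S_i \cup \{e_1,\dots,e_{j-1}\}}(e_j) \;\ge\; \sum_{j=1}^{m} f_{S \text{ at time } e_j \text{ added}}(e_j),
\]
where the inequality is submodularity, since $S_i \cup \{e_1,\dots,e_{j-1}\}$ is a subset of the solution at the moment $e_j$ is added (it contains $S_i$ plus earlier-added elements but possibly not some prefix elements from the \emph{current} inner-loop step added simultaneously — I need to check the ordering convention, but in all cases the set against which we measured $e_j$'s contribution when we decided to add it contains $S_i \cup \{e_1,\dots,e_{j-1}\}$, so submodularity goes the right way). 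Each summand was $\ge t$ by construction (an element is added only if its measured marginal is $\ge t$), so $f_{S_i}(S_{i+1}) \ge m t$. Separately, I need a lower bound on $m$: the inner-loop runs until $X = \emptyset$, and each inner-loop step discards at least an $\varepsilon$-fraction of $X$ while adding $i^\star - 1$ elements; I'd argue that the number of elements added is at least $k$ — wait, that's false in general. The cleaner route: since the inner-loop only terminates with $X=\emptyset$ (not with $|S|<k$ being violated, as we assumed $|S_{i+1}|<k$), and $i^\star \le k$ always, I should instead bound things via $t$ directly using $|S_{i+1} \setminus S_i| = $ number added. Actually the right claim is $m = k$ is wrong; rather, one shows that the \emph{last} inner-loop step has $i^\star$ such that after adding its prefix, $X$ becomes empty, and more carefully that $mt \ge \varepsilon(\OPT - f(S_i))$ requires $m(1-\varepsilon)/k \cdot (\OPT - f(S_i)) \ge \varepsilon(\OPT-f(S_i))$, i.e.\ $m \ge \varepsilon k/(1-\varepsilon)$. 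So the real content is: \emph{when the inner-loop exhausts $X$ without filling $S$ to size $k$, at least $\varepsilon k/(1-\varepsilon) \ge \varepsilon k$ elements were added.}

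**The main obstacle and how I'd handle it.**
The hard part is exactly this last point — lower-bounding the number of elements added per outer-iteration — together with making the "expected contribution $\ge t$" argument rigorous despite the conditioning subtleties. For the count: I'd argue that if fewer than $\varepsilon k$ total elements were added across the whole outer-iteration, then... hmm, that doesn't immediately contradict $X$ emptying. Let me reconsider: actually I suspect the intended argument does \emph{not} lower-bound $m$ but instead observes that when $|S_{i+1}| < k$ and the inner loop terminated, it's because $X = \emptyset$, and the total "discarding" plus "adding" across inner-loop steps, combined with $i^\star$ being the \emph{smallest} bad position, forces enough additions; alternatively the statement might secretly follow from a global potential argument comparing $f_{S_i}(S_{i+1})$ against $\OPT$ via the fact that every surviving-then-discarded element had marginal $< t$ to the final $S$, so $\OPT - f(S_{i+1}) \le \sum_{\text{optimal elements}} f_{S_{i+1}}(o) < k t = (1-\varepsilon)(\OPT - f(S_i))$ — wait, that doesn't use that elements were \emph{discarded}, only that at the end of the outer-iteration all of $N$ has been either added or discarded, so every element $o$ in the optimal set was discarded at some point, meaning $f_S(o) < t$ for the $S$ at its discard time, hence $f_{S_{i+1}}(o) < t$ by submodularity. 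Then $\OPT \le f(S_{i+1}) + \sum_{o \in O} f_{S_{i+1}}(o) < f(S_{i+1}) + kt = f(S_{i+1}) + (1-\varepsilon)(\OPT - f(S_i))$, which rearranges to $f_{S_i}(S_{i+1}) > \varepsilon(\OPT - f(S_i)) + (\OPT - \OPT)$... let me just compute: $\OPT - f(S_{i+1}) < (1-\varepsilon)(\OPT - f(S_i))$, so $f(S_{i+1}) - f(S_i) > (\OPT - f(S_i)) - (1-\varepsilon)(\OPT - f(S_i)) = \varepsilon(\OPT - f(S_i))$. That's it — clean, uses monotone submodularity and the fact that every non-added element of $N$ was eventually discarded (below threshold $t$). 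So my plan is: (1) observe the inner-loop ends with every element of $N$ added-or-discarded; (2) a discarded element $a$ has $f_{S}(a) < t$ at discard time, hence $f_{S_{i+1}}(a) < t$ by submodularity; (3) apply monotonicity/submodularity to the optimal set $O$: $\OPT \le f(S_{i+1}) + \sum_{o\in O} f_{S_{i+1}}(o) < f(S_{i+1}) + k t$; (4) substitute $kt = (1-\varepsilon)(\OPT - f(S_i))$ and rearrange. The one wrinkle to verify is step (3)'s handling of optimal elements that happen to be \emph{added} to $S_{i+1}$ rather than discarded — but those contribute nonpositively to "what's missing" so the bound only improves; I'd phrase it as $\OPT - f(S_{i+1}) \le \sum_{o \in O \setminus S_{i+1}} f_{S_{i+1}}(o)$ and note $O \setminus S_{i+1}$ consists of discarded elements.
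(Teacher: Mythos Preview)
Your final argument (after the false starts) is correct and is essentially identical to the paper's proof: since $|S_{i+1}|<k$ the inner loop emptied $X$, so every $a\in N$ was at some point discarded with marginal $<t$ to a set between $S_i$ and $S_{i+1}$, hence $f_{S_{i+1}}(a)<t$ by submodularity; then $\OPT-f(S_{i+1})\le\sum_{o\in O}f_{S_{i+1}}(o)<kt=(1-\varepsilon)(\OPT-f(S_i))$ and rearranging gives the claim. The earlier attempt to lower-bound the \emph{number} of added elements was a dead end (as you recognized), and the per-element expectation bounds you set up at the outset are irrelevant to this lemma---they belong to Lemma~\ref{lem:marg} instead.
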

\begin{proof} 
Since  $|S_{i+1}| < k$, $X = \emptyset$ at the end of iteration $i$. This implies that  for all elements $a \in N$, $a$ is discarded from $X$ by the algorithm at some iteration   where $f_{S \cup \{a_1, \ldots, a_{i^{\star}-1}\}}(a) <  (1-\varepsilon)(\OPT- f(S_i))/k$ for some $S_i \subseteq S \subseteq S_{i+1}$. By submodularity, for any element $a \in N$, we get
$f_{S_{i+1}}(a)\leq  (1-\varepsilon)(\OPT- f(S_i))/k.$  
Next, by monotonicity and submodularity,
$\OPT - f(S_{i+1}) \leq f_{S_{i+1}}(O) \leq \sum_{o \in O} f_{S_{i+1}}(o).$ Combining the two previous inequalities, we obtain
\begin{align*}
\OPT - f(S_{i+1}) &  \leq \sum_{o \in O} f_{S_{i+1}}(o) 
  \leq \sum_{o \in O} (1-\varepsilon)(\OPT- f(S_i))/k 
 = (1-\varepsilon)(\OPT- f(S_i)).
\end{align*}
By rearranging the terms, we get the desired result.
\end{proof}

The main lemma for the case where $|S| = k$ is that the expected  contribution of each element $a_i$ added to the current solution $S$ is arbitrarily close to a $1/k$ fraction of  the remaining value $\OPT - f(S)$.

\begin{restatable}{rLem}{lemmarg}
\label{lem:marg}
At any iteration of the inner-loop of \algone, for all $i < i^{\star}$, we have
$\E_{a_i}\left[f_{S \cup \{a_1, \ldots, a_{i-1}\}}(a_i)\right] \geq  (1 - \varepsilon)^2(\OPT - f(S))/k.$
\end{restatable}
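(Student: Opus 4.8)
The plan is to condition on the realized prefix $a_1, \dots, a_{i-1}$, reduce the statement to the claim that a uniformly random element of the surviving set $X$ has expected marginal contribution to $S \cup \{a_1,\dots,a_{i-1}\}$ at least $(1-\varepsilon)t$, where $t$ is the threshold fixed at the start of the current outer-loop iteration, and then recover the second factor of $(1-\varepsilon)$ by comparing $t$ with $(\OPT - f(S))/k$.

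First I would observe that the event $\{i < i^{\star}\}$ is determined by $a_1, \dots, a_{i-1}$ alone: by definition $i^{\star} = \min\{j : |X_j| \le (1-\varepsilon)|X|\}$, and each $X_j$ for $j \le i$ is a deterministic function of $S$, $t$, and $a_1, \dots, a_{j-1}$. Hence it suffices to fix any realization of $a_1, \dots, a_{i-1}$ with $i < i^{\star}$ and take the expectation over the fresh draw $a_i$, which is uniform over $X$; by minimality of $i^{\star}$ this realization satisfies $|X_i| > (1-\varepsilon)|X|$. Since $f$ is monotone, $f_{S \cup \{a_1,\dots,a_{i-1}\}}(a) \ge 0$ for every $a$, so $\E_{a_i}[f_{S \cup \{a_1,\dots,a_{i-1}\}}(a_i)] = \frac{1}{|X|}\sum_{a \in X} f_{S \cup \{a_1,\dots,a_{i-1}\}}(a) \ge \frac{1}{|X|}\sum_{a \in X_i} f_{S \cup \{a_1,\dots,a_{i-1}\}}(a) \ge \frac{|X_i|}{|X|}\,t > (1-\varepsilon)\,t$, where the second-to-last inequality is exactly the defining property $f_{S \cup \{a_1,\dots,a_{i-1}\}}(a) \ge t$ of the elements $a \in X_i$.

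It then remains to bound $t$ from below. The threshold was set at the start of the current outer-loop iteration as $t = (1-\varepsilon)(\OPT - f(S_0))/k$ for the solution $S_0$ at that moment, and since the current solution satisfies $S \supseteq S_0$, monotonicity gives $f(S) \ge f(S_0)$ and therefore $t \ge (1-\varepsilon)(\OPT - f(S))/k$. Chaining the two bounds yields $\E_{a_i}[f_{S \cup \{a_1,\dots,a_{i-1}\}}(a_i)] \ge (1-\varepsilon)^2(\OPT - f(S))/k$, which is the claim. The only step that requires genuine care — and the one I would write out explicitly — is the first one: verifying that $\{i < i^{\star}\}$ really is a function of the prefix only, so that conditioning on it keeps $a_i$ a uniform sample of $X$ and the averaging bound $|X_i|/|X| > 1-\varepsilon$ is legitimate; everything else is an immediate consequence of monotonicity together with the definitions of $X_i$ and $t$.
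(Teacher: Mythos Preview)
Your proof is correct and follows essentially the same approach as the paper: bound the expected marginal from below by the fraction of $X$ above threshold times the threshold, i.e., $\E_{a_i}[f_{S\cup A_{i-1}}(a_i)] \geq \Pr_{a_i}[f_{S\cup A_{i-1}}(a_i)\geq t]\cdot t \geq (1-\varepsilon)t$, and then substitute $t$. You are more careful than the paper on two points the one-line proof glosses over: that the event $\{i<i^{\star}\}$ is measurable with respect to the prefix so that $a_i$ remains uniform on $X$ after conditioning, and that $t$ was set using the earlier solution $S_0\subseteq S$, so monotonicity is needed to get $t\geq (1-\varepsilon)(\OPT-f(S))/k$ for the current $S$.
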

\begin{proof}
Since $a_i$ is a uniformly random element from $X$ and $|X_i| \geq (1 - \varepsilon) |X|$ for $i < i^{\star}$, we have
\begin{align*}
\EU[a_i]\left[f_{S \cup \{a_1, \ldots, a_{i-1}\}}(a_i)\right]   \geq \Pr_{a_i}\left[f_{S \cup \{a_1, \ldots, a_{i-1}\}}(a_i) \geq t \right] \cdot t \geq (1 - \varepsilon) \cdot (1 - \varepsilon)(\OPT - f(S))/k. \hspace{0.5cm}\qedhere
\end{align*}
\end{proof}

By standard greedy analysis, Lemmas~\ref{lem:eps} and \ref{lem:marg} imply that the algorithm obtains a $1-1/e-\O(\varepsilon)$ approximation in each case. We emphasize the low constants and dependencies on $\varepsilon$ in this result compared to previous results in the adaptive complexity model.
\begin{theorem}
\label{thm:vanilla}
\algone \ is an algorithm with at most $\varepsilon^{-2}  \log n$ adaptive rounds and  $\varepsilon^{-2} n k$ queries that achieves a $1 - 1/e - \frac{3\varepsilon}{2}\varepsilon$ approximation in expectation.
\end{theorem}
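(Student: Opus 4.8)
The plan is to handle the two guarantees separately. The adaptivity and query bounds are exactly Lemma~\ref{lem:adaptivity}, so the entire argument is for the approximation ratio, and I would split on which of the two conditions breaks the outer \textbf{while}-loop of \algone{} --- precisely the dichotomy addressed by Lemmas~\ref{lem:eps} and~\ref{lem:marg}. In the first case, the run ends with $|S| < k$; since the outer loop exits only when $|S| \ge k$ or after $\varepsilon^{-1}$ iterations, here it has completed its full $\varepsilon^{-1}$ iterations and every intermediate solution satisfies $|S_{i+1}| < k$, so Lemma~\ref{lem:eps} applies at each one and rearranges to $\OPT - f(S_{i+1}) \le (1-\varepsilon)\bigl(\OPT - f(S_i)\bigr)$. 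Telescoping from $S_1 = \emptyset$ (WLOG $f(\emptyset) = 0$) over the $\varepsilon^{-1}$ iterations gives $\OPT - f(S) \le (1-\varepsilon)^{1/\varepsilon}\OPT \le e^{-1}\OPT$, i.e. $f(S) \ge (1 - 1/e)\OPT$, which is already stronger than claimed.

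In the second case the run ends with $|S| = k$. I would list the $k$ elements in insertion order as $e_1, \dots, e_k$ and set $T_j = \{e_1, \dots, e_j\}$. Each $e_j$ is a sequence element $a_i$ with $i < i^\star$ of some inner-loop iteration, and for that iteration $S \cup \{a_1, \dots, a_{i-1}\}$ coincides with $T_{j-1}$; hence Lemma~\ref{lem:marg}, combined with monotonicity to replace the lemma's $f(S)$ by the possibly larger $f(T_{j-1})$ (which only weakens the bound), yields $\E\bigl[ f_{T_{j-1}}(e_j) \mid \mathcal{H}_{j-1} \bigr] \ge \tfrac{(1-\varepsilon)^2}{k}\bigl(\OPT - f(T_{j-1})\bigr)$, where $\mathcal{H}_{j-1}$ is the history up to the insertion of $e_{j-1}$. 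Writing $D_j = \OPT - \E[f(T_j)]$ and taking expectations gives the recursion $D_j \le \bigl(1 - \tfrac{(1-\varepsilon)^2}{k}\bigr) D_{j-1}$, so $D_k \le \bigl(1 - \tfrac{(1-\varepsilon)^2}{k}\bigr)^k \OPT \le e^{-(1-\varepsilon)^2}\OPT \le e^{-1} e^{2\varepsilon}\OPT$; a routine estimate of $e^{2\varepsilon}$ for $\varepsilon \in (0, 0.1)$ bounds this by $\bigl(e^{-1} + \tfrac32\varepsilon\bigr)\OPT$, so $\E[f(S)] \ge \bigl(1 - 1/e - \tfrac32\varepsilon\bigr)\OPT$. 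Combining the two cases proves the theorem.

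The step that needs care --- the "standard greedy analysis" the text alludes to --- is the expectation bookkeeping in the second case. Lemma~\ref{lem:marg} supplies only a per-element, conditional-in-expectation guarantee with denominator $k$, and these must be chained across the boundaries between inner-loop iterations, and between outer-loop iterations where the threshold $t$ drops and $X$ is reset, into a single clean recursion for $\E[f(T_j)]$. Making this rigorous means choosing the right filtration, verifying that every inserted element genuinely satisfies the hypothesis $i < i^\star$ of Lemma~\ref{lem:marg} and that $\{i < i^\star\}$ is measurable with respect to the information available just before $a_i$ is drawn, and accommodating the fact that elements enter $S$ in batches of random size (so the inner-loop iteration containing $e_j$ is itself random). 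Once this scaffolding is in place, the recursion and the final numeric comparison are immediate, and I do not expect genuine difficulty beyond this careful setup.
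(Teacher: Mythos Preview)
Your proposal is correct and follows essentially the same route as the paper: invoke Lemma~\ref{lem:adaptivity} for the complexity bounds, then split on which termination condition fires and apply Lemma~\ref{lem:eps} (telescoping) or Lemma~\ref{lem:marg} (per-element recursion) to obtain $(1-1/e)$ and $(1-e^{-(1-\varepsilon)^2})$ respectively, followed by the same numeric estimate. Your attention to the filtration and the measurability of $\{i < i^\star\}$ in the second case is more careful than the paper, which simply writes ``by Lemma~\ref{lem:marg} and by induction''; the key observation you will need there --- that $\{i < i^\star\}$ is determined by $a_1,\dots,a_{i-1}$ because $X_j$ depends only on $a_1,\dots,a_{j-1}$ --- is exactly right and makes the chaining go through.
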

\begin{proof}
We first consider the case where there are $\varepsilon^{-1}$ iterations of the outer-loop.
Let $S_1, \ldots, S_{\varepsilon^{-1}}$ be the set $S$ at each of the $\varepsilon^{-1}$ iterations of \algone.
The algorithm increases the value of the solution $S$ by at least $ \varepsilon \left(\OPT - f(S)\right)$ at every iteration by Lemma~\ref{lem:eps}. Thus,
$$f(S_i) \geq f(S_{i-1}) + \varepsilon \left(\OPT - f(S_{i-1})\right).$$
Next, we show by induction on $i$ that
$$f(S_i) \geq \left( 1 - \left(1 - \varepsilon\right)^i \right) \OPT.$$
Observe that
\begin{align*}
f(S_i) & \geq f(S_{i-1}) + \varepsilon \left(\OPT - f(S_{i-1})\right) \\
& = \varepsilon  \OPT + \left(1- \varepsilon \right) f(S_{i-1})\\
& \geq \varepsilon\OPT + \left(1- \varepsilon\right)\left(1 -  \left(1 - \varepsilon \right)^{i-1}\right) \OPT \\
& = \left( 1 - \left(1 - \varepsilon\right)^i\right) \OPT
\end{align*}
Since $1 - x \leq e^{-x}$, we get 
$$f(S_{\varepsilon^{-1}}) \geq \left( 1 -  e^{-1}\right) \OPT .$$
Similarly, for the case where the solution $S$ returned is such that $|S| = k$, by Lemma~\ref{lem:marg} and by induction we get that $$f(S) \geq \left( 1 -  e^{-(1 - \varepsilon)^2}\right) \OPT \geq \left( 1 -  e^{-(1 - 2\varepsilon)}\right) \OPT\geq \left( 1 -  e^{-1}(1 - 4\varepsilon)\right) \OPT \geq (1- e^{-1} - \frac{3}{2} \varepsilon)\OPT.$$
\end{proof}

\section{Analysis of the Main Algorithm}
\label{sec:appcardinality}

We define $\ell =  \log(\varepsilon^{-1} \log k)$ and $m = \frac{2 + \varepsilon}{\varepsilon^2(1 -  3\varepsilon)} \log(4\ell\varepsilon^{-2}\log(n)\delta^{-1})$

\subsection{Adaptive Complexity and Query Complexity} 

The adaptivity of the main algorithm is slightly worse than for \algone \ due to the binary searches over $V$ and $I$. To obtain the adaptive complexity with probability $1$, if at any iteration of the outer while-loop there are at least $\varepsilon^{-1} \log n$ iterations of the inner-loop, we declare failure. In Lemma~\ref{lem:emptyset}, we show this happens with low probability.

\begin{restatable}{rLem}{lemadaptivitytwo}
\label{lem:adaptivity2}
The adaptive complexity of \algoptimized \ is at most $\varepsilon^{-2} \log(n) \cdot \log^2(\varepsilon^{-1} \log k).$ 
\end{restatable}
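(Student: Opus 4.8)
The plan is to count rounds of parallel function evaluations, multiplying together the worst-case iteration counts of the three nested loops plus the two binary searches. First I would recall from the structure of \algoptimized\ that there is an outer while-loop (bounded by $\varepsilon^{-1}$ iterations by construction, since it stops after $\varepsilon^{-1}$ iterations) and, nested inside it, an inner while-loop that resets $X \leftarrow N$ and processes the random sequence. I would argue, exactly as in the proof of Lemma~\ref{lem:adaptivity}, that the inner loop terminates after at most $\varepsilon^{-1}\log n$ iterations: by the definition of $i^\star$ (or the preprocessing discard rule, whichever triggers), at least an $\varepsilon$ fraction of surviving elements of $X$ are removed each inner iteration, so after $j$ iterations $|X| \le (1-\varepsilon)^j n$, forcing $X = \emptyset$ once $(1-\varepsilon)^j n < 1$. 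The subtlety flagged in the footnote before the lemma is that this bound only holds \emph{with high probability} for the main algorithm (because the discard count on the random sample $R$ only concentrates around the true discard count); the fix, which I would state explicitly, is that the algorithm declares failure if any inner loop exceeds $\varepsilon^{-1}\log n$ iterations, and Lemma~\ref{lem:emptyset} bounds this failure probability — so the adaptivity bound here holds \emph{with probability one} by fiat, with the failure probability folded into $\delta$ elsewhere.

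Next I would account for the two binary searches that distinguish \algoptimized\ from \algone. Within each inner-loop iteration, the binary search for $i^\star$ ranges over the set $I = \geom(1, k-|S|, 1-\varepsilon)$, which has $|I| = \O(\varepsilon^{-1}\log k)$ elements, hence takes $\O(\log(\varepsilon^{-1}\log k)) = \O(\ell)$ adaptive rounds — each comparison in the binary search needs just one round to evaluate $f_{S\cup A_{i-1}}(a)$ for all $a \in R$ in parallel. Likewise the \algfull\ wrapper binary searches over $V = \geom(\max_a f(a), \max_{|S|\le k}\sum_{a\in S} f(a), 1-\varepsilon)$, which has $\O(\varepsilon^{-1}\log k)$ guesses, contributing another factor of $\O(\ell)$. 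Multiplying: (outer loop) $\times$ (inner loop) $\times$ (binary search over $i^\star$) $\times$ (binary search over $V$) $= \varepsilon^{-1} \cdot \varepsilon^{-1}\log n \cdot \ell \cdot \ell = \varepsilon^{-2}\log(n)\,\ell^2$, which with $\ell = \log(\varepsilon^{-1}\log k)$ is exactly the claimed $\varepsilon^{-2}\log(n)\cdot\log^2(\varepsilon^{-1}\log k)$.

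I would also need to check the bookkeeping that the preprocessing step and the line $S \leftarrow S \cup \{a_i : f_{S\cup A_{i-1}}(a_i) \ge t\}$ each cost only $\O(1)$ additional adaptive rounds per inner iteration (a single parallel batch of $|X|$ evaluations computes all the relevant marginals $f_{S\cup A_{i-1}}(a_i)$ simultaneously, as in \algone), so they are absorbed into the constant and do not inflate the asymptotics; and that computing $V$ itself — the $k$ largest singletons — is one round of $n$ parallel evaluations. The main obstacle, such as it is, is not the counting but getting the "with probability one" claim clean: I must make sure the failure-declaration mechanism is invoked so that the $\varepsilon^{-1}\log n$ bound on inner iterations is deterministic rather than high-probability, and cross-reference Lemma~\ref{lem:emptyset} for why declaring failure is rare. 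Everything else is a routine product of loop bounds mirroring Lemma~\ref{lem:adaptivity}.
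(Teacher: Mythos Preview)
Your proposal is correct and follows essentially the same approach as the paper's proof: identify the four nested loops (binary search over $V$, outer while-loop, inner while-loop, binary search over $I$), bound each by $\ell$, $\varepsilon^{-1}$, $\varepsilon^{-1}\log n$, and $\ell$ respectively, and multiply. You are in fact more careful than the paper on two points it glosses over --- the failure-declaration mechanism that makes the $\varepsilon^{-1}\log n$ inner-loop bound deterministic (the paper just calls it a ``termination condition'') and the constant number of extra adaptive rounds per inner iteration from the preprocessing and $X_0$ steps --- so your write-up would stand as a somewhat more complete version of the same argument.
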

\begin{proof}    The algorithm consists of four nested loops: a binary search over $V$, an outer while-loop, an inner while-loop, and a binary search over $I$. For the binary searches, we have $|V| \leq \varepsilon^{-1} \log k$ and $|I| \leq \varepsilon^{-1} \log k$. Thus, there are at most $\ell$  iterations for each binary search.
 
Due to the termination condition of the while-loops, there are at most $\varepsilon^{-1}$ and $\varepsilon^{-1} \log n$ iterations of each while-loop. The function evaluations inside an iteration of the last nested loop are non-adaptive and can be performed in parallel in one round. Thus the adaptive complexity of \algoptimized \ is a most 
$$\varepsilon^{-2} \log(n) \cdot \log^2(\varepsilon^{-1} \log k).$$
\end{proof}

Thanks to the binary search over $I$ and  the subsampling of $R$ from $X$, the query complexity is improved from $\O(\varepsilon^{-2} nk)$ to  $\tilde{O}(\varepsilon^{-2} n +\varepsilon^{-4} \log(n)  \log(\delta^{-1}))$

\begin{restatable}{rLem}{lemadaptivitytwo}
\label{lem:query2}
The query complexity of \algoptimized \ is at most  
$2\varepsilon^{-2}\ell n +  \ell^2 \varepsilon^{-2} \log(n)  m.$
\end{restatable}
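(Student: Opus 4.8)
The plan is to follow the query-complexity analysis of \algone\ in the proof of Lemma~\ref{lem:adaptivity}, adapted to the four nested loops of \algoptimized, and to split the work done inside a single inner-loop iteration into two parts that get bounded differently: a part linear in the current number $|X|$ of surviving elements, which I would control with a geometric series over inner-loop iterations, and a part of size $m\ell$ produced by the binary search over $I$, which I would control by simply counting inner-loop iterations. The reason for treating them separately is that $|X|$ shrinks geometrically across inner iterations while $m\ell$ does not, so using the same crude bound for both would be wasteful.

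First I would record the loop bounds, which are exactly those established in the proof of Lemma~\ref{lem:adaptivity2}: the binary search over $V$ runs at most $\ell$ times (since $|V|\le\varepsilon^{-1}\log k$), the outer while-loop runs at most $\varepsilon^{-1}$ times by its termination condition, and, by the failure-declaration convention, the inner while-loop runs at most $\varepsilon^{-1}\log n$ times within any outer iteration. Moreover, exactly as in Lemma~\ref{lem:adaptivity}, $X$ is re-initialized to $N$ at the start of each outer iteration and at least an $\varepsilon$ fraction of $X$ is discarded per inner iteration (either through the explicit update $X\leftarrow X_0$, or through the low-contribution elements removed once the prefix $A_{i^\star}$ is added), so at the $j$-th inner iteration $|X|\le(1-\varepsilon)^{j-1}n$. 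Next I would count the function evaluations in one inner-loop iteration: the preprocessing pass $S\leftarrow S\cup\{a_i : f_{S\cup A_{i-1}}(a_i)\ge t\}$ costs $|X|$ marginal evaluations, recomputing $X_0=\{a\in X : f_S(a)\ge t\}$ costs another $|X|$ (these two passes are what produce the factor $2$), and the binary search over $I$ tests at most $\ell$ positions, evaluating $f_{S\cup A_{i-1}}(a)$ only for the $m$ sampled elements $a\in R$ at each position, for at most $m\ell$ evaluations; hence at most $2|X|+m\ell$ evaluations per inner iteration.

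Finally I would add up the two pieces. For the $2|X|$ piece: within one guess $v$ and one outer iteration, $\sum_{j\ge 1}2(1-\varepsilon)^{j-1}n\le 2n/\varepsilon$, and multiplying by the at most $\varepsilon^{-1}$ outer iterations and the at most $\ell$ guesses tried by the binary search over $V$ gives at most $2\varepsilon^{-2}\ell n$. For the $m\ell$ piece: the total number of inner-loop iterations over the whole run is at most $\ell\cdot\varepsilon^{-1}\cdot\varepsilon^{-1}\log n=\varepsilon^{-2}\ell\log n$, each contributing $m\ell$, for at most $\varepsilon^{-2}\ell^2\log(n)\,m$. Summing, the query complexity is at most $2\varepsilon^{-2}\ell n+\varepsilon^{-2}\ell^2\log(n)\,m$, the claimed bound. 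The step I expect to matter most is using the geometric decay $|X|\le(1-\varepsilon)^{j-1}n$ rather than the trivial $|X|\le n$ for the linear term; this is precisely what removes an extra $\log n$ factor and yields the stated $\O(\varepsilon^{-2}\ell n)$ term, and it relies on the discarding guarantee from the analysis of the inner loop, whereas for the $m\ell$ term the crude count of inner iterations already suffices.
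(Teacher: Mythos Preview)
Your proposal is correct and matches the paper's proof closely: the paper likewise splits the work in each inner iteration into $2|X|$ queries (one pass for the preprocessing step and one for computing $X_0$), summed via the geometric series $\sum_i 2(1-\varepsilon)^i n \le 2n/\varepsilon$ over the at most $\ell\varepsilon^{-1}$ instances of the inner loop, plus $m\ell$ queries for the binary search over $I$, bounded by the raw count $\varepsilon^{-2}\ell\log n$ of inner iterations. The only minor item you omit is the $n$ singleton queries used to form $V$, which the paper mentions explicitly but which is absorbed into the $2\varepsilon^{-2}\ell n$ term.
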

\begin{proof} There are $n$ queries, $f(a)$ for all $a$, needed to compute $V$. At each iteration of the binary search over $I$, there are $m  $ queries needed for $R_i$  to evaluate $f_{S \cup \{a_1, \ldots, a_{i-1}\}}(a)$ for $a \in R$. There are at most $\ell \varepsilon^{-2}  \log n$ instances of the binary search over $I$, each with at most $\ell$ iterations. The total number of queries for this binary search is at most 
$$\ell^2 \varepsilon^{-2} \log(n)  m.$$
At each iteration $i$ of the inner-while loop, there are at most $|X| \leq (1 - \varepsilon)^i n$ queries to update $X$ and  at most $|X|$ queries to add elements $a_i$ to $S$. There are at most $\ell \varepsilon^{-1}$ instances of the inner while-loop each with at most $\varepsilon^{-1} \log n$ iterations. The total number of queries for updating $X$ and $S$ is $$\ell \varepsilon^{-1} \sum_{i=1}^{\varepsilon^{-1} \log n} 2 (1-\varepsilon)^i n \leq 2\varepsilon^{-2}\ell n.$$ By combining the queries needed to compute $V$, $R_i$, $X$ and $S$, we get the desired bound on the query complexity.
\end{proof}

\subsection{The Approximation}

\subsubsection{Finding $i^{\star}$}
Similarly as for \algone, we denote $X_{i} = \{a \in X: f_{S \cup \{a_1, \ldots, a_{i-1}\}}(a) \geq t\}$ and  $R_{i} = \{a \in R : f_{S \cup \{a_1, \ldots, a_{i-1}\}}(a) \geq t\}$.

\begin{lemma}
\label{lem:concentration}
Assume that $m = \frac{2 + \varepsilon}{\varepsilon^2(1 -  3\varepsilon)} \log(4\ell\varepsilon^{-2}\log(n)\delta^{-1})$, then, with probability $1 - \delta/2$, for all iterations of the inner while-loop, we have that
$|X_{(1 - \varepsilon)i^{\star} }| \geq (1 - 3\varepsilon)|X|$
 and $|X_{i^{\star}}| \leq (1 - \varepsilon)|X|$.
\end{lemma}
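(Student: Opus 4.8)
The statement to prove is Lemma~\ref{lem:concentration}: with the stated sample size $m$, with probability $1-\delta/2$, at every iteration of the inner while-loop the position $i^\star$ returned by the binary search over $I$ satisfies $|X_{(1-\varepsilon)i^\star}| \geq (1-3\varepsilon)|X|$ and $|X_{i^\star}| \leq (1-\varepsilon)|X|$. The plan is to reduce this to a union bound over Chernoff-type concentration events. The key observation is that $i^\star$ is selected via binary search over the geometric grid $I$ according to a criterion phrased in terms of $|R_i|$, the sampled analogue of $|X_i|$. Since $R$ is a uniform sample of size $m$ from $X$, for each fixed position $i$ the random variable $|R_i|$ is a sum of $m$ (essentially) i.i.d.\ indicators with mean $|X_i|/|X|$, so a Chernoff bound controls the deviation between $|R_i|/|R|$ and $|X_i|/|X|$.

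**Main steps.**

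First I would fix one iteration of the inner while-loop and condition on the sequence $a_1,\ldots,a_{|X|}$ (which determines all the sets $X_i$); then $R=\sample(X,m)$ is independent of this sequence, and $|R_i|$ is a sum of $m$ indicators each equal to $1$ with probability $p_i := |X_i|/|X|$. Second, for each of the $|I| \leq \varepsilon^{-1}\log k$ positions $i \in I$, apply a two-sided multiplicative Chernoff bound to show that $\big||R_i|/m - p_i\big|$ is small — specifically small enough that, on this good event, the empirical threshold $|R_i| \geq (1-2\varepsilon)|R|$ used in the binary search implies $|X_i| \geq (1-3\varepsilon)|X|$, and conversely $|R_i| < (1-2\varepsilon)|R|$ implies $|X_i| \leq (1-\varepsilon)|X|$. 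This is where the specific constant $m = \frac{2+\varepsilon}{\varepsilon^2(1-3\varepsilon)}\log(4\ell\varepsilon^{-2}\log(n)\delta^{-1})$ is calibrated: the $\varepsilon^2$ in the denominator is the Chernoff rate for an $\varepsilon$-additive deviation, the factor $\frac{2+\varepsilon}{1-3\varepsilon}$ absorbs the multiplicative-to-additive conversion near $p_i \approx 1-2\varepsilon$, and the log term is sized to survive the union bound below. Third, since $i^\star$ is by definition the largest $i \in I$ passing the empirical test, on the good event we get $|X_{i^\star}| \geq (1-3\varepsilon)|X|$ for $i^\star$ itself — but we actually want the lower bound at position $(1-\varepsilon)i^\star$. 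Here I would use that $I$ is a geometric grid with ratio $(1-\varepsilon)^{-1}$, so the grid point immediately below $i^\star$ is at position $\geq (1-\varepsilon) i^\star$; by maximality of $i^\star$ this smaller grid point also passed the empirical test (if it is still in range), giving $|X_{(1-\varepsilon)i^\star}| \geq |X_{\lfloor (1-\varepsilon)i^\star\rfloor\text{-grid-point}}| \geq (1-3\varepsilon)|X|$ using monotonicity of $i \mapsto |X_i|$ (which is non-increasing since $S\cup A_{i-1}$ grows with $i$ and $f$ is submodular). The upper bound $|X_{i^\star}| \leq (1-\varepsilon)|X|$ follows because the next grid point above $i^\star$ failed the test (or $i^\star$ is the top of the grid, in which case a boundary argument applies), and again monotonicity bridges from that grid point down to $i^\star$. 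Finally, union-bound over the $\leq \varepsilon^{-2}\log n$ iterations of the inner while-loop and the $\leq \ell$ binary-search queries within each — giving the $4\ell\varepsilon^{-2}\log(n)$ factor inside the log — and over the two sides of each Chernoff bound, to conclude the whole event holds with probability $\geq 1-\delta/2$.

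**The main obstacle.**

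The delicate part is not the Chernoff bound itself but bookkeeping the discretization: translating the empirical condition at the \emph{grid} positions in $I$ into the claimed inequalities at positions $i^\star$ and $(1-\varepsilon)i^\star$, while carefully handling the boundary cases (when $i^\star$ is the smallest or largest element of $I$, or when the binary search terminates with the empty-selection convention). One must verify that the geometric spacing of $I$ is fine enough that "the grid point just below $i^\star$" is genuinely $\geq (1-\varepsilon)i^\star$ and that the monotonicity of $|X_i|$ in $i$ is correctly invoked in both directions; a sloppy treatment here is exactly what would force larger constants. I would also need to double-check that the binary search is sound — i.e.\ that the predicate "$|R_i| \geq (1-2\varepsilon)|R|$" is monotone in $i$ on the sampled sets so that binary search returns the true maximum; this holds because $R_i \supseteq R_{i+1}$ by submodularity, so the predicate is a threshold predicate and binary search is valid. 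Getting all of these discretization details to line up with the stated constant $m$ is the real work; the probabilistic content is a routine Chernoff-plus-union-bound argument.
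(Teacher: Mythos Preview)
Your overall approach---Chernoff bounds relating $|R_i|/|R|$ to $|X_i|/|X|$ at the grid positions $i\in I$, followed by a union bound over all inner-loop iterations and all binary-search queries---is exactly the paper's. The paper phrases the Chernoff step as a contrapositive (e.g., if $|X_{i^{\star}}|>(1-\varepsilon)|X|$ then with high probability $|R_{i^{\star}}|>(1-2\varepsilon)|R|$), but the probabilistic content and the calibration of $m$ are the same.

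There is, however, a genuine gap in your argument for the upper bound $|X_{i^{\star}}|\le(1-\varepsilon)|X|$. You argue: the next grid point $i^{+}>i^{\star}$ fails the empirical test, hence $|X_{i^{+}}|\le(1-\varepsilon)|X|$, and then ``monotonicity bridges from that grid point down to $i^{\star}$.'' But $i\mapsto|X_i|$ is non-increasing, so from $i^{+}>i^{\star}$ you only get $|X_{i^{\star}}|\ge|X_{i^{+}}|$---the inequality runs the wrong way. You cannot propagate an upper bound from a larger index to a smaller one along a non-increasing sequence. (The same mis-step appears in your lower-bound paragraph, where you write ``by maximality of $i^{\star}$ this smaller grid point also passed the empirical test''; maximality alone does not give that---it is monotonicity of $|R_i|$ that does, and there the direction is correct.) The paper does \emph{not} go through the next grid point for the upper bound: it asserts $|R_{i^{\star}}|\le(1-2\varepsilon)|R|$ directly ``by the definition of $i^{\star}$ and $I$'' and then the contrapositive Chernoff bound yields $|X_{i^{\star}}|\le(1-\varepsilon)|X|$. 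You should examine carefully how that assertion about $|R_{i^{\star}}|$ is to be reconciled with the algorithm's definition of $i^{\star}$ as the \emph{maximum} $i\in I$ with $|R_i|\ge(1-2\varepsilon)|R|$; this is precisely the discretization bookkeeping you correctly flagged as the main obstacle, and your current resolution of it does not go through.
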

\begin{proof} By the definition of $i^{\star}$ and $I$, we have that $|R_{(1 - \varepsilon)i^{\star} }| \geq (1 - 2\varepsilon)|R|$
 and $|R_{i^{\star}}| \leq (1 - 2\varepsilon)|R|$. We show by contrapositive that, with probability $1 - \delta/4$ if $|X_{(1 - \varepsilon)i^{\star}}| \leq (1 - 3\varepsilon)|X|$ then $|R_{(1 - \varepsilon)i^{\star}}| \leq (1 - 2\varepsilon)|R|$ and that if $|X_{i^{\star}}| \geq (1 - \varepsilon)|X|$ then $|R_{i^{\star}}| \geq (1 - 2\varepsilon)|R|$.

 Note that for all $a \in R$ and $i \in [k]$, we have
$$\Pr_{a}\left[f_{S \cup \{a_1, \ldots, a_{i-1}\}}(a) \geq   t\right] = \frac{|X_{i}|}{|X|}.$$
 First, assume that $|X_{i^{\star}}| > (1 - \varepsilon)|X|$. Then  by the Chernoff bound, with $\mu = m \cdot \frac{|X_{i^{\star}}|}{|X|} \geq (1 - \varepsilon)m$,
\begin{align*}
\Pr\left[|R_{i^{\star}}| \leq (1 - 2\varepsilon)|R|\right]  \leq  \Pr\left[|R_{i^{\star}}| \leq (1 - \varepsilon)^2 m \right] \leq  e^{-\varepsilon^2 (1 - \varepsilon)m/(2 + \varepsilon)} \leq \frac{\delta}{4\ell\varepsilon^{-2}\log n}.
\end{align*}
Next, assume that $|X_{i^{\star}}| < (1 - 3\varepsilon)|X|$. By the Chernoff bound with $\mu \leq (1 - 3 \varepsilon)m$,
\begin{align*}
\Pr\left[|R_{i^{\star}}| \geq (1 - 2\varepsilon)|R|\right]  \leq  \Pr\left[|R_{i^{\star}}| \geq (1 + \varepsilon)(1 -  3\varepsilon)m\right] \leq  \frac{\delta}{4\ell\varepsilon^{-2}\log n}.
\end{align*}
Thus, with $m = \frac{2 + \varepsilon}{\varepsilon^2(1 -  3\varepsilon)} \log(4 \ell\varepsilon^{-2}\log(n)\delta^{-1})$ and by contrapositive, we have that $|X_{(1 - \varepsilon)i^{\star}}| \geq (1 - 3\varepsilon)|X|$
 and $|X_{i^{\star}}| \leq (1 - \varepsilon)|X|$ each with probability $1 - \delta/(4\ell\varepsilon^{-2}\log n) $. By a union bound, these both hold with probability $1 - \delta/2$ for all $\ell\varepsilon^{-2}\log n$ iterations of the inner while-loop.
\end{proof}

\begin{corollary}
\label{lem:discarding}
With probability $1 - \delta/2$, for all iterations of the inner while-loop,  we have
\begin{itemize}
\item $|X_i| \geq (1 - 3 \varepsilon)|X|$ for all $i < (1 - \varepsilon)i^{\star}$, and
\item $|X_i| \leq (1 - \varepsilon) |X|$ for all $i \geq i^{\star}$
\end{itemize}
\end{corollary}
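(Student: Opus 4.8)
The plan is to derive the corollary directly from Lemma~\ref{lem:concentration} together with a monotonicity observation on the sets $X_i$. First I would condition on the event of Lemma~\ref{lem:concentration}, which holds with probability $1 - \delta/2$ and guarantees that, at every iteration of the inner while-loop, both $|X_{(1-\varepsilon)i^{\star}}| \geq (1 - 3\varepsilon)|X|$ and $|X_{i^{\star}}| \leq (1-\varepsilon)|X|$ hold. Everything else is a deterministic consequence on that event.

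The key structural fact I would establish is that $X_{i+1} \subseteq X_i$ for every $i$, so that $i \mapsto |X_i|$ is non-increasing. This is immediate from submodularity: since $S \cup A_i \supseteq S \cup A_{i-1}$, for any element $a$ we have $f_{S \cup A_i}(a) \leq f_{S \cup A_{i-1}}(a)$, hence $f_{S \cup A_i}(a) \geq t$ implies $f_{S \cup A_{i-1}}(a) \geq t$; thus membership in $X_{i+1}$ implies membership in $X_i$. I would also point out that "$(1-\varepsilon)i^{\star}$" is to be read as the element of the geometric grid $I$ immediately preceding $i^{\star}$, so that these two indices are consecutive in $I$, while the monotonicity just established applies to \emph{all} integer positions, not only those in $I$.

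With these two ingredients the corollary follows at once: for any $i < (1-\varepsilon)i^{\star}$, monotonicity gives $|X_i| \geq |X_{(1-\varepsilon)i^{\star}}| \geq (1 - 3\varepsilon)|X|$, and for any $i \geq i^{\star}$, monotonicity gives $|X_i| \leq |X_{i^{\star}}| \leq (1-\varepsilon)|X|$. Since the event of Lemma~\ref{lem:concentration} is taken over all iterations of the inner while-loop simultaneously, both bullet points hold for all iterations with probability $1 - \delta/2$.

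I do not anticipate a genuine obstacle: the statement is essentially a repackaging of Lemma~\ref{lem:concentration} through the submodularity-based monotonicity of $i \mapsto |X_i|$. The only point demanding care is the bookkeeping around the grid $I$ — making precise that $(1-\varepsilon)i^{\star}$ is the predecessor of $i^{\star}$ in $I$ and that the grid-level inequalities extend to all intermediate positions — which is exactly what the monotonicity of $|X_i|$ delivers.
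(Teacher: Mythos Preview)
Your proposal is correct and matches the paper's own proof essentially line for line: condition on the event of Lemma~\ref{lem:concentration}, then use submodularity to conclude that $i \mapsto |X_i|$ is non-increasing, which propagates the two pointwise bounds to all $i < (1-\varepsilon)i^{\star}$ and all $i \geq i^{\star}$. The only minor addition in the paper is the remark that the same monotonicity (for $|R_i|$) justifies the binary search over $I$, but this is not needed for the corollary's conclusion.
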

\begin{proof}
Consider an iteration of the inner while-loop. We first note that, by submodularity,  $|R_i|$ is monotonically decreasing as $i$ increases. Thus we can perform a binary search over $I$ to find $i^{\star}$. By Lemma~\ref{lem:concentration}, we have that  with probability $1 - \delta/2$, we have that
$|X_{(1 - \varepsilon)i^{\star}}| \geq (1 - 3\varepsilon)|X|$
 and $|X_{i^{\star}}| \leq (1 - \varepsilon)|X|$. We conclude the proof by noting that by submodularity, $|X_i|$ is also monotonically decreasing as $i$ increases.
\end{proof}

\begin{lemma}
\label{lem:emptyset}
With probability $1 - \delta/2$, at every iteration of the outer while-loop, $X = \emptyset$ after at most $ \varepsilon^{-1} \log n$ iterations of the inner while-loop.
\end{lemma}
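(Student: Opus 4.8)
The plan is to condition on the event $\mathcal{E}$ of Corollary~\ref{lem:discarding}, which holds with probability $1 - \delta/2$ and guarantees that at every iteration of the inner while-loop we have $|X_i| \leq (1 - \varepsilon)|X|$ for all $i \geq i^{\star}$, where $X_i = \{a \in X : f_{S \cup A_{i-1}}(a) \geq t\}$ and $X$, $S$, $t$ take their values at that iteration (with $S$ already updated by the preprocessing step). It then suffices to show that, on $\mathcal{E}$, the surviving set $X$ loses at least an $\varepsilon$ fraction of its elements at every iteration of the inner while-loop: after $r$ such iterations $|X| \leq (1 - \varepsilon)^r n$, and $r = \varepsilon^{-1} \log n$ already gives $|X| \leq (1 - \varepsilon)^{\varepsilon^{-1} \log n} n < e^{-\log n} n = 1$, so $X = \emptyset$ (unless the inner loop terminated even earlier because $|S| = k$).

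To establish the per-iteration shrinkage I would split each inner-loop iteration into the two branches of the algorithm. If $|X_0| \leq (1 - \varepsilon)|X|$, the algorithm executes $X \leftarrow X_0$ and an $\varepsilon$ fraction is discarded in that very iteration. Otherwise $|X_0| > (1 - \varepsilon)|X|$ and the algorithm appends the prefix $A_{i^{\star}}$ to $S$; here I would use $\mathcal{E}$ together with submodularity. Since $A_{i^{\star} - 1} \subseteq A_{i^{\star}}$, submodularity gives $f_{S \cup A_{i^{\star}}}(a) \leq f_{S \cup A_{i^{\star} - 1}}(a)$ for every $a$, so the set of elements of $X$ whose marginal contribution to the updated solution is at least $t$ is contained in $X_{i^{\star}}$, which has size at most $(1 - \varepsilon)|X|$ by Corollary~\ref{lem:discarding}. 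This set contains the set $X_0$ recomputed at the start of the next iteration (that iteration's own preprocessing step only shrinks it further), so the next iteration necessarily falls into the first branch and reassigns $X$ to a set of size at most $(1 - \varepsilon)|X|$. Either way $X$ contracts by a factor $(1 - \varepsilon)$.

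The remaining work is bookkeeping: combining the two cases, folding in the $|S| = k$ termination condition, and noting that the conclusion is deterministic once we condition on $\mathcal{E}$, so the stated probability is exactly the $1 - \delta/2$ of Corollary~\ref{lem:discarding}. The main obstacle I anticipate is the ``otherwise'' branch: because the pseudocode reassigns $X$ only inside the ``if'' branch, one has to argue carefully --- via submodularity and Corollary~\ref{lem:discarding} --- that appending $A_{i^{\star}}$ forces the ``if'' branch to fire on the subsequent iteration, and to verify that this handoff does not inflate the iteration count beyond $\varepsilon^{-1} \log n$. The geometric-decay estimate and the inequality $1 - x \leq e^{-x}$ are routine.
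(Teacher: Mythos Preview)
Your approach matches the paper's: condition on the event of Corollary~\ref{lem:discarding}, argue that $|X|$ shrinks by a factor $(1-\varepsilon)$ at each iteration, and finish with the geometric-decay estimate $(1-\varepsilon)^{\varepsilon^{-1}\log n}n < 1$. You are in fact more careful than the paper's two-line proof, which simply asserts that an $\varepsilon$ fraction of $X$ is discarded at \emph{every} iteration without distinguishing the two branches; the handoff concern you flag for the ``otherwise'' branch (and the attendant factor-of-$2$ worry) is real under a literal reading of the pseudocode, but the paper does not address it at all---it appears to treat each iteration as a pruning step, as in the vanilla version where $X \leftarrow X_{i^{\star}}$ is executed unconditionally.
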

\begin{proof}
By Lemma~\ref{lem:discarding}, with probability $1 - \delta/2$, at every iteration of the inner while-loop,  there is at least an $\varepsilon$ fraction of the elements in $X$ that are discarded. We assume this is the case. After $\varepsilon^{-1} \log n$ iterations of discarding an $\varepsilon$ fraction of the elements in $X$, we have $X = \emptyset$.

\end{proof}

\subsubsection{If number of iterations of outer while-loop is $\varepsilon^{-1}$}

The analysis defers depending on whether the number of iterations or the size of the solution caused the algorithm to terminate. We first analyze the case where \algone \ returned $S$ s.t. $|S| < k$ because the number of iterations reached $\varepsilon^{-1}$. The main lemma for this case is that at every iteration of \algone, if $v \leq \OPT$, the set $T$ added to the current solution $S$ contributes at least an $\varepsilon$ fraction of the remaining value $v - f(S)$. 

\begin{lemma}
\label{lem:eps2} Assume that $v \leq \OPT$ and let $S_i$ be the set $S$ at the start of iteration $i$ of the outer while-loop  of \algoptimized. With probability $1 - \delta/2$, for all $v \in V$ and all $i \leq \varepsilon^{-1}$, we have that  if $|S_{i+1}| < k$, then $f_{S_i}(S_{i+1}) \geq \varepsilon (v - f(S_i)).$
\end{lemma}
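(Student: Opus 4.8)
The plan is to mirror the proof of Lemma~\ref{lem:eps} for \algone, with the modification that we now work with the guess $v$ in place of $\OPT$ and we must handle the fact that $i^{\star}$ is only found approximately (via the sampled set $R$ and the binary search over $I$), so we condition on the high-probability event of Corollary~\ref{lem:discarding}. First I would fix $v \in V$ with $v \le \OPT$ and an outer-loop iteration $i$ with $|S_{i+1}| < k$, and observe that since the solution did not reach size $k$, the inner while-loop must have terminated because $X = \emptyset$. This means every element $a \in N$ was discarded at some point during iteration $i$, i.e., there is an intermediate solution $S$ with $S_i \subseteq S \subseteq S_{i+1}$ at which $f_S(a) < t = (1-\varepsilon)(v - f(S_i))/k$, where I use that the threshold $t$ is computed once at the start of the outer iteration from $f(S_i)$.

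Next I would push this pointwise bound forward to $S_{i+1}$ by monotonicity and submodularity: for every $a \in N$, $f_{S_{i+1}}(a) \le f_S(a) < (1-\varepsilon)(v - f(S_i))/k$. Then, applying monotonicity and submodularity to the optimal set $O$ with $|O| \le k$,
\begin{align*}
v - f(S_{i+1}) \le \OPT - f(S_{i+1}) \le f_{S_{i+1}}(O) \le \sum_{o \in O} f_{S_{i+1}}(o) < k \cdot \frac{(1-\varepsilon)(v - f(S_i))}{k} = (1-\varepsilon)(v - f(S_i)),
\end{align*}
where the first inequality uses $v \le \OPT$. Rearranging gives $f(S_{i+1}) - f(S_i) = f_{S_i}(S_{i+1}) \ge \varepsilon(v - f(S_i))$, which is exactly the claimed bound. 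The union bound over all $v \in V$ is absorbed into the failure probability; more precisely, I would state the lemma as holding on the event of Corollary~\ref{lem:discarding} (probability $1-\delta/2$), which is also the event under which the inner loop is guaranteed to empty $X$ within $\varepsilon^{-1}\log n$ rounds (Lemma~\ref{lem:emptyset}) so that the "$X = \emptyset$ at termination" deduction is valid.

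The one subtlety — and the step I expect to require the most care — is the claim that "$|S_{i+1}| < k$ implies $X = \emptyset$ at the end of the inner loop." In \algoptimized, the inner loop also exits if $|S| = k$, and the preprocessing step plus the prefix $A_{i^{\star}}$ can add many elements at once, so I need to argue that the only way to leave the inner while-loop without having reached $|S| = k$ is to have exhausted $X$; but that is immediate from the loop guard "$X \neq \emptyset$ and $|S| < k$." The remaining care is bookkeeping: an element of $X$ is removed either in the $X_0$-update line (where survival requires $f_S(a) \ge t$ for the then-current $S$) or implicitly when the outer iteration ends, but since every outer iteration that ends with $|S_{i+1}| < k$ ends with $X = \emptyset$, every $a$ was dropped by the $X_0$-filter at some intermediate $S$, which is all the argument needs. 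I would also note that we do not need the concentration lemma's finer conclusions here beyond guaranteeing the inner loop terminates with $X=\emptyset$; the discarding-fraction estimates matter for adaptivity, not for this contribution bound.
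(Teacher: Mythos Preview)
Your proposal is correct and follows essentially the same approach as the paper's proof: condition on the event of Lemma~\ref{lem:emptyset} (via Corollary~\ref{lem:discarding}) so that $|S_{i+1}|<k$ forces $X=\emptyset$, deduce that every $a\in N$ was filtered out at some intermediate $S$ with $f_S(a)<t=(1-\varepsilon)(v-f(S_i))/k$, push this to $f_{S_{i+1}}(a)$ by submodularity, sum over $O$, use $v\le\OPT$, and rearrange. Your additional bookkeeping about the $X_0$-filter being the only place elements leave $X$ in \algoptimized, and your observation that the concentration event is needed only to validate that the inner loop actually empties $X$ (since the algorithm declares failure after $\varepsilon^{-1}\log n$ inner iterations to guarantee the adaptivity bound), are both accurate refinements that the paper's proof leaves implicit.
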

\begin{proof} By Lemma~\ref{lem:emptyset}, with probability $1 - \delta/2$, at every iteration of the outer while-loop, $X = \emptyset$ after at most $\varepsilon^{-1} \log n$ iterations of the inner while-loop. We assume this holds for the remaining of this proof.

Since  $|S_{i+1}| < k$, $X = \emptyset$ at the end of iteration $i$ of the outer while-loop. This implies that  for all elements $a \in N$, $a$ is discarded from $X$ by the algorithm at some iteration   where $$f_{S \cup \{a_1, \ldots, a_{i^{\star}-1}\}}(a) <  (1-\varepsilon)(v - f(S_i))/k$$ for some $S_i \subseteq S \subseteq S_{i+1}$. By submodularity, for any element $a \in N$, we get
$$f_{S_{i+1}}(a)\leq  (1-\varepsilon)(v - f(S_i))/k.$$  
Next, since $v \leq \OPT$, by monotonicity, and by submodularity,
$$v - f(S_{i+1})  \leq \OPT - f(S_{i+1}) \leq f_{S_{i+1}}(O) \leq \sum_{o \in O} f_{S_{i+1}}(o).$$ Combining the previous inequalities, we obtain
\begin{align*}
v - f(S_{i+1}) &  \leq \sum_{o \in O} f_{S_{i+1}}(o) 
  \leq \sum_{o \in O} (1-\varepsilon)(v - f(S_i))/k 
 = (1-\varepsilon)(v - f(S_i)).
\end{align*}

By rearranging the terms, we get the desired result.
\end{proof}

By standard greedy analysis, similarly as for the proof of Theorem~\ref{thm:vanilla} we obtain that $f(S) \geq (1- 1/e) v$.

\begin{lemma}
\label{lem:iterationsapx2} With probability $1 - \delta/2$, for all $v \leq \OPT$, after $\varepsilon^{-1}$ iterations of the outer while-loop of \algoptimized, $f(S) \geq (1-1/e) v$.
\end{lemma}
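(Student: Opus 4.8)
The plan is to derive Lemma~\ref{lem:iterationsapx2} as an immediate consequence of Lemma~\ref{lem:eps2} via the standard greedy recursion, in exactly the same way as the first case in the proof of Theorem~\ref{thm:vanilla}. First I would condition on the event of Lemma~\ref{lem:eps2}, which holds with probability $1-\delta/2$ and simultaneously guarantees, for every $v \in V$ with $v \le \OPT$ and every outer-loop iteration $i \le \varepsilon^{-1}$, that $f_{S_i}(S_{i+1}) \ge \varepsilon(v - f(S_i))$ whenever $|S_{i+1}| < k$. Since the statement concerns the case where the outer while-loop actually completes all $\varepsilon^{-1}$ iterations, its guard $|S| < k$ must have held throughout, so $|S_{i+1}| < k$ for every $i \le \varepsilon^{-1}$ and the increment bound of Lemma~\ref{lem:eps2} applies at each step.

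Next, fix such a $v$ and set $S_1 = \emptyset$ so that $f(S_1)=0$ (by normalization of $f$). I would prove by induction on $i$ that $f(S_{i+1}) \ge (1 - (1-\varepsilon)^i)\, v$, with base case $i=0$ given by $f(\emptyset)=0$. For the inductive step, $f(S_{i+1}) = f(S_i) + f_{S_i}(S_{i+1}) \ge f(S_i) + \varepsilon(v - f(S_i)) = \varepsilon v + (1-\varepsilon) f(S_i)$, and substituting the inductive hypothesis $f(S_i) \ge (1-(1-\varepsilon)^{i-1}) v$ yields $f(S_{i+1}) \ge \varepsilon v + (1-\varepsilon)\bigl(1 - (1-\varepsilon)^{i-1}\bigr) v = \bigl(1 - (1-\varepsilon)^i\bigr) v$. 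Taking $i = \varepsilon^{-1}$ and using $1 - x \le e^{-x}$ to get $(1-\varepsilon)^{\varepsilon^{-1}} \le e^{-1}$, I conclude $f(S) \ge (1 - 1/e)\, v$, which is the claim.

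I do not expect any genuine obstacle: this is a routine recursion. The only points that need a little care are bookkeeping ones. First, Lemma~\ref{lem:eps2}'s event should be invoked a single time, so that the $1-\delta/2$ failure probability is not paid once per value of $v$ — the statement of Lemma~\ref{lem:eps2} already ranges over all $v \in V$, so this is immediate. Second, one must note that ``the outer while-loop runs $\varepsilon^{-1}$ iterations'' is precisely the regime in which $|S_{i+1}| < k$ holds for all $i \le \varepsilon^{-1}$, so the hypothesis of Lemma~\ref{lem:eps2} is met at every iteration; otherwise the loop would have terminated early and we would be in the complementary $|S| = k$ case handled elsewhere. Third, I would double-check that the indexing convention for the sets $S_i$ (start-of-iteration sets, with $S_{\varepsilon^{-1}+1}$ the returned solution) is consistent with the one used in Lemma~\ref{lem:eps2}.
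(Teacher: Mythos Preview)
Your proposal is correct and mirrors the paper's own argument: the paper simply cites Lemma~\ref{lem:eps2} and says ``by standard greedy analysis, similarly as for the proof of Theorem~\ref{thm:vanilla},'' which is precisely the recursion $f(S_{i+1}) \ge \varepsilon v + (1-\varepsilon)f(S_i)$ and induction you spell out. Your bookkeeping caveats (single invocation of the $1-\delta/2$ event, applicability of the $|S_{i+1}|<k$ hypothesis across all iterations, and indexing of the $S_i$) are exactly the minor points one needs to check, and they go through as you describe.
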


\subsubsection{If $|S| = k$} Next, we analyze the case where the outer-loop terminated because $|S| = k$. We show that each element added to $S$  is, in expectation, a good approximation to $t_1$.

\begin{lemma}
\label{lem:marg2} With probability $1 - \delta/2$,
at every iteration of the inner while-loop, we have that independently for each $i \leq (1 - \varepsilon)i^{\star}$, with probability at least $1 - 3 \varepsilon$,
$$f_{S \cup \{a_1, \ldots, a_{i-1}\}}(a_i) \geq  (1 - \varepsilon)(v - f(S))/k.$$
\end{lemma}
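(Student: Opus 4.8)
The plan is to establish this lemma as the \algoptimized \ counterpart of Lemma~\ref{lem:marg}, with the exact bound ``$|X_i| \ge (1-\varepsilon)|X|$ for $i < i^{\star}$'' used there replaced by the high-probability bound furnished by the subsampling analysis. First I would condition on the event $\mathcal E$ of Corollary~\ref{lem:discarding}, which occurs with probability $1-\delta/2$ and, together with Lemma~\ref{lem:concentration} for the single grid position $(1-\varepsilon)i^{\star}$ and monotonicity of $|X_i|$ in $i$, guarantees that at every iteration of the inner while-loop $|X_i| \ge (1-3\varepsilon)|X|$ for all positions $i \le (1-\varepsilon)i^{\star}$. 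Recall $X_i = \{a \in X : f_{S \cup A_{i-1}}(a) \ge t\}$, and note that for the solution $S$ in question the threshold satisfies $t \ge (1-\varepsilon)(v-f(S))/k$ (by monotonicity of $f$, since the solution defining $t$ is a subset of the current $S$); hence the event ``$f_{S \cup A_{i-1}}(a_i) \ge (1-\varepsilon)(v-f(S))/k$'' is implied by ``$a_i \in X_i$'', and it suffices to lower bound $\Pr[a_i \in X_i]$.

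For the probabilistic heart I would reveal the random sequence one element at a time. Once $a_1, \ldots, a_{i-1}$ are revealed, $X_i$ is determined, and $X_i$ contains no element of $A_{i-1} = \{a_1, \ldots, a_{i-1}\}$: any such element already lies in $S \cup A_{i-1}$, so its marginal contribution there is $0$, which is below the positive threshold $t$ (and if $t \le 0$ the lemma is trivial, as all marginals are nonnegative). Therefore $X_i \subseteq X \setminus A_{i-1}$, which is precisely the set from which $a_i$ is drawn uniformly at random, so conditioned on the revealed prefix
\[
\Pr\!\left[ f_{S \cup A_{i-1}}(a_i) \ge t \right] \;=\; \frac{|X_i|}{|X| - (i-1)} \;\ge\; \frac{|X_i|}{|X|} \;\ge\; 1 - 3\varepsilon
\]
for every position $i \le (1-\varepsilon)i^{\star}$, using the conditioned event. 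Since this lower bound on the conditional probability holds for every realization of the prefix, the corresponding indicator variables are, conditionally on the past, each of probability at least $1-3\varepsilon$, hence they stochastically dominate an i.i.d.\ sequence of $\mathrm{Bernoulli}(1-3\varepsilon)$ trials, which is the ``independence'' asserted in the statement; combined with $t \ge (1-\varepsilon)(v-f(S))/k$ this yields the claimed inequality.

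The step I expect to be the main obstacle is the ordering of the conditioning. The quantity $(1-\varepsilon)i^{\star}$ is \emph{not} measurable with respect to the prefix $a_1, \ldots, a_{i-1}$, because $i^{\star}$ is produced by the binary search over the sample $R$, which is drawn only after the entire sequence; so one may not simply ``condition on the prefix and on $i \le (1-\varepsilon)i^{\star}$'' before computing $\Pr[a_i \in X_i]$. I would resolve this by introducing the prefix-measurable, monotone threshold $\tau$ defined as the largest position with $|X_\tau| \ge (1-3\varepsilon)|X|$: on $\mathcal E$ one has $(1-\varepsilon)i^{\star} \le \tau$, while for every $i \le \tau$ the displayed conditional-probability bound applies verbatim, since $|X_i| \ge (1-3\varepsilon)|X|$ holds by the definition of $\tau$ and is known once the prefix is revealed. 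A secondary, routine-but-necessary point is the bookkeeping around the preprocessing step of \algoptimized: one must pin down with respect to which copy of $S$ the marginals in the statement, the sets $X_i$ and $R_i$, and the threshold $t$ are taken, so that the identity $\{f_{S \cup A_{i-1}}(a_i) \ge t\} = \{a_i \in X_i\}$ and the inclusion $X_i \subseteq X \setminus A_{i-1}$ both hold exactly; once this is fixed, the remainder is a direct adaptation of the proof of Lemma~\ref{lem:marg}.
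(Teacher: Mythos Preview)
Your proof follows the same skeleton as the paper's: condition on the event of Corollary~\ref{lem:discarding}, use the resulting bound $|X_i|\ge(1-3\varepsilon)|X|$ for positions $i\le(1-\varepsilon)i^{\star}$, and conclude from the uniform choice of $a_i$ that $\Pr[a_i\in X_i]\ge 1-3\varepsilon$. Your treatment is in fact more rigorous than the paper's on several points it leaves implicit---the gap between $t$ and $(1-\varepsilon)(v-f(S))/k$, the without-replacement sampling (handled via $X_i\subseteq X\setminus A_{i-1}$), and especially the measurability of $(1-\varepsilon)i^{\star}$ with respect to the prefix, which you correctly resolve by passing to the stopping time $\tau$---but the underlying argument is identical.
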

\begin{proof}
By Corollary~\ref{lem:discarding}, we have that with probability $1 - \delta/2$, for all iterations of the inner while-loop,  $$\left|\{a \in X: f_{S \cup \{a_1, \ldots, a_{i-1}\}}(a) \geq  (1 - \varepsilon)(v - f(S))/k\}\right| = |X_i| \geq (1 - 3\varepsilon) |X|$$ for all $i \leq (1 - \varepsilon)i^{\star}$. We assume this is the case  and consider an iteration of the inner while-loop.  Since each $a_i$ is a uniformly random element from $X$, we have that independently for each $i \leq (1 - \varepsilon)i^{\star}$, 
$$\Pr_{a_i}\left[f_{S \cup \{a_1, \ldots, a_{i-1}\}}(a_i) \geq  (1 - \varepsilon)(v - f(S))/k \right] \geq 1 - 3 \varepsilon.$$
\end{proof}

\subsubsection{Guessing $\OPT$}

\begin{lemma}[Extends \cite{HS17}]
\label{lem:HS17}
Consider a set $S = \{a_1, \ldots, a_{|S|}\}$  and let $S_i = \{a_1, \ldots, a_i\}$. Assume that, independently for each $i \in [|S|]$,   we have that with probability at least $1 - \delta$,
$$f_{S_{i-1}}(a_i) \geq \mu \cdot \frac{1}{k}(v - f(S_{i-1})),$$
then, for any $\varepsilon \in (0, 1)$ such that $|S| \geq \frac{1}{\varepsilon^2 (1-\delta)\mu}$, 
$$f(S) \geq \left(1 - e^{- \frac{|S|}{k}(1 - \delta)\mu(1 - \varepsilon)}\right) v$$
with probability at least  $ 1 -  e^{- |S|(1-\delta)\mu   \varepsilon^2/2}$.
\end{lemma}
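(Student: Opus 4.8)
The plan is to combine the standard greedy-style recursion with a concentration argument over the independent "success" events. Define, for each $i \in [|S|]$, the indicator $Y_i$ of the event that $f_{S_{i-1}}(a_i) \geq \frac{\mu}{k}(v - f(S_{i-1}))$; by hypothesis $\Pr[Y_i = 1] \geq 1 - \delta$ independently. Let $Y = \sum_{i=1}^{|S|} Y_i$, so $\E[Y] \geq (1-\delta)|S|$. I would first show deterministically that, \emph{conditioned on a lower bound $Y \geq q$ for the number of successful steps}, one gets $f(S) \geq \left(1 - (1 - \mu/k)^{q}\right) v \geq \left(1 - e^{-q\mu/k}\right)v$. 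The reason is that on a successful step $f$ increases from $f(S_{i-1})$ to at least $f(S_{i-1}) + \frac{\mu}{k}(v - f(S_{i-1}))$, i.e. $v - f(S_i) \leq (1 - \mu/k)(v - f(S_{i-1}))$, and on an unsuccessful step $v - f(S_i) \leq v - f(S_{i-1})$ by monotonicity (the marginals are nonnegative since $f$ is monotone). Multiplying these $|S|$ inequalities together, only the $q$ successful ones contribute a contraction factor, giving $v - f(S) \leq (1 - \mu/k)^{q}(v - f(S_0)) \leq (1-\mu/k)^q v$ since $f(S_0) = f(\emptyset) \geq 0$.

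Next I would apply a multiplicative Chernoff bound to $Y$ in the lower-tail form: $\Pr\left[Y \leq (1 - \varepsilon)(1-\delta)|S|\right] \leq e^{-(1-\delta)|S|\varepsilon^2/2}$, using $\E[Y] \geq (1-\delta)|S|$. Thus with probability at least $1 - e^{-|S|(1-\delta)\varepsilon^2/2}$ we have $Y \geq (1-\varepsilon)(1-\delta)|S|$, and plugging $q = (1-\varepsilon)(1-\delta)|S|$ into the deterministic bound yields
$$f(S) \geq \left(1 - e^{-\frac{|S|}{k}(1-\delta)\mu(1-\varepsilon)}\right)v.$$
The stated failure probability in the lemma is $e^{-|S|(1-\delta)\mu\varepsilon^2/2}$ rather than $e^{-|S|(1-\delta)\varepsilon^2/2}$; since $\mu \leq 1$ (it is a fraction of the ideal per-step gain), the bound I derive is at least as strong, so this is fine — alternatively one reruns Chernoff with a rescaled deviation parameter to land exactly on their expression, which is a routine adjustment. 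The hypothesis $|S| \geq \frac{1}{\varepsilon^2(1-\delta)\mu}$ is what I expect is needed to ensure the concentration event has probability bounded away from $1/2$ (equivalently that the exponent $|S|(1-\delta)\mu\varepsilon^2/2$ is at least a constant, or more precisely to guarantee $Y \geq 1$ so at least one contraction occurs); I would track exactly where it is consumed.

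The main obstacle is bookkeeping the two roles of $\varepsilon$: it appears both as the Chernoff deviation parameter (controlling how far $Y$ can fall below its mean) and inside the final exponent as a multiplicative loss $(1-\varepsilon)$ on the approximation guarantee. One has to be careful that the \emph{independence} hypothesis is genuinely used only through the Chernoff bound on the sum of indicators, and that the deterministic recursion does not secretly require knowing \emph{which} steps succeed (it does not — only the count $q$ matters, by commutativity of the product of contraction factors). A secondary subtlety is matching constants exactly to the claimed statement; I would present the clean derivation and note the constant $\mu$ in the exponent can be inserted by using the sharper Chernoff form $\Pr[Y \le (1-\varepsilon)\E Y] \le e^{-\varepsilon^2 \E Y/2}$ together with $\mu \le 1$, or by absorbing $\mu$ into a redefinition of the deviation.
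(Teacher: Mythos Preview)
Your proposal is correct and follows essentially the same route as the paper: count the ``successful'' steps via a lower-tail Chernoff bound on $\sum_i Y_i$, then chain the greedy contraction $v - f(S_i) \le (1-\mu/k)(v - f(S_{i-1}))$ over those steps (the paper phrases this via $\xi_i$ with $f_{S_{i-1}}(a_i) = \frac{\xi_i}{k}(v - f(S_{i-1}))$ and $\hat\mu = \frac{1}{k}\sum_i \xi_i$, but the content is identical). Your observation about the extra $\mu$ in the stated failure probability is apt: the paper's own Chernoff step also yields $e^{-\varepsilon^2(1-\delta)|S|/2}$ without the $\mu$, so the lemma as stated is simply weaker than what the argument actually proves (consistent with $\mu \le 1$ in every application).
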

\begin{proof}
The analysis is similar as in  \cite{HS17}. Assume that $f_{S_{i-1}}(a_i) = \xi_i \cdot \frac{1}{k}(v - f(S_{i-1}))$ and let $\hat{\mu} = \frac{1}{k}\sum_{i=1}^{|S|} \xi_i$. We first argue that  
$f(S) \geq \left(1 - e^{- \hat{\mu}}\right) v$. By induction, we have that 
$$f(S_i) \geq \left(1 - \prod_{j=1}^i\left(1 - \frac{\xi_j}{k}\right)\right) v.$$
Since $1 - x \leq e^{-x}$, we obtain
\begin{align*}
f(S) \geq  \left(1 - \prod_{j=1}^{|S|}\left(1 - \frac{\xi_j}{k}\right)\right) v \geq \left(1 - e^{-\sum_{j=1}^{|S|} \frac{\xi_j}{k}}\right)v\left(1 - e^{-\hat{\mu}}\right)v.
\end{align*}
Let $S' = \{a_i \in S:  f_{S_{i-1}}(a_i) \geq \mu \cdot \frac{1}{k}(v - f(S_{i-1}))$. By the Chernoff bound, 
$$\Pr\left[|S'| < (1 - \varepsilon) (1 - \delta)|S|\right] \leq e^{-\frac{\varepsilon^2(1 - \delta)|S|}{2}}.$$
Thus, with probability at least $1 - e^{-\frac{\varepsilon^2(1 - \delta)|S|}{2}}$, we get
 $$\hat{\mu} \geq \frac{1}{k} |S'| \mu \geq \frac{|S|}{k} (1 - \varepsilon) (1 - \delta) \mu.$$
\end{proof}

\begin{lemma}
\label{lem:main} Assume $k \geq  \frac{2 \log(2\delta^{-1} \ell)}{\varepsilon^2 (1 - 5\varepsilon)}$.
With probability at least $1  -   \delta$, we have that for all $v$  at some iteration of the binary search over $V$ such that $v \leq \OPT$, 
$$f(S) \geq \left(1 - e^{-(1 - 6\varepsilon)}\right) v.$$
\end{lemma}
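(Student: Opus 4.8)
The plan is to fix a guess $v \le \OPT$ that is examined during the binary search over $V$ — there are at most $\ell = \log(\varepsilon^{-1}\log k)$ of these, so a final union bound over them is cheap — and to analyze the run $\algoptimized(v)$ by a case split on why its outer while-loop terminated. First I would condition on the clean event of Corollary~\ref{lem:discarding}, which has probability at least $1-\delta/2$ (the choice of $m$ already absorbs the union over all inner iterations of all $\le\ell$ runs) and under which Lemmas~\ref{lem:emptyset} and \ref{lem:iterationsapx2} hold and every inner iteration has $|X_i| \ge (1-3\varepsilon)|X|$ for all $i < (1-\varepsilon)i^{\star}$. If the outer loop ran its full $\varepsilon^{-1}$ iterations, then since $v\le\OPT$, Lemma~\ref{lem:iterationsapx2} gives $f(S) \ge (1-1/e)v$ directly, and because $e^{-(1-6\varepsilon)} = e^{6\varepsilon}/e \ge 1/e$ we get $f(S) \ge (1-1/e)v \ge (1 - e^{-(1-6\varepsilon)})v$ with no further randomness consumed.

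The substantive case is when the loop stopped because $|S| = k$; here I would reduce to Lemma~\ref{lem:HS17}. Order $S = \{a_1,\dots,a_k\}$ by insertion time, put $S_j = \{a_1,\dots,a_j\}$, and define $\xi_j$ by $f_{S_{j-1}}(a_j) = \xi_j \cdot \frac1k(v - f(S_{j-1}))$ (we may assume $f(S_{j-1}) < v$ for all $j$, since otherwise monotonicity already gives $f(S) > v$). I would then classify each $a_j$: (i) if $a_j$ was inserted during a preprocessing step, its marginal at insertion was $\ge t = (1-\varepsilon)(v-f(S'))/k$ for the solution $S'\supseteq S_{j-1}$ at the start of that inner iteration, so by submodularity and monotonicity $\xi_j \ge 1-\varepsilon$ deterministically; (ii) if $a_j$ sat at a position $i \le (1-\varepsilon)i^{\star}$ of some prefix $A_{i^{\star}}$, then Lemma~\ref{lem:marg2} gives $\xi_j \ge 1-\varepsilon$ with probability at least $1-3\varepsilon$, independently across such positions (distinct inner iterations use fresh randomness, and Lemma~\ref{lem:marg2} already asserts independence within an iteration); (iii) the remaining ``tail'' positions $(1-\varepsilon)i^{\star} < i \le i^{\star}$ of each prefix contribute only $\xi_j \ge 0$. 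The quantitative crux is that these tail positions constitute at most an $O(\varepsilon)$ fraction of the $k$ insertions, so with $Y_j = \mathds{1}[\xi_j \ge 1-\varepsilon]$ the sum $\sum_j Y_j$ is a sum of $k$ independent $\{0,1\}$ variables with mean at least $(1-3\varepsilon)(1-O(\varepsilon))k$. A Chernoff bound then gives $\sum_{j=1}^{k}\xi_j \ge (1-\varepsilon)\sum_j Y_j \ge (1-\varepsilon)^2(1-3\varepsilon)(1-O(\varepsilon))k$ except with probability $e^{-\Theta(\varepsilon^2 k)}$, and feeding this into the telescoping estimate $f(S) \ge \bigl(1 - \prod_{j=1}^{k}(1-\xi_j/k)\bigr)v \ge \bigl(1 - e^{-\frac1k\sum_{j=1}^{k}\xi_j}\bigr)v$ from the proof of Lemma~\ref{lem:HS17} yields $f(S) \ge (1 - e^{-(1-6\varepsilon)})v$, using $(1-\varepsilon)^3(1-3\varepsilon) \ge (1-3\varepsilon)^2 \ge 1-6\varepsilon$. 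The hypothesis $k \ge \frac{2\log(2\delta^{-1}\ell)}{\varepsilon^2(1-5\varepsilon)}$ is exactly what makes this failure probability, after a union bound over the $\le\ell$ examined guesses, at most $\delta/2$; combined with the $\delta/2$ from Corollary~\ref{lem:discarding} this gives the claimed probability $1-\delta$. (The same hypothesis also places $k$ in the valid range $k \ge \frac{1}{\varepsilon^2(1-3\varepsilon)(1-\varepsilon)^2}$ needed to invoke the Chernoff step of Lemma~\ref{lem:HS17}.)

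I expect the main obstacle to be item (iii): showing that the union of prefix tails over all inner iterations and all outer phases really is an $O(\varepsilon)$ fraction of the $k$ selected elements. This requires relating $i^{\star} - \lfloor(1-\varepsilon)i^{\star}\rfloor$ to $i^{\star}$ and handling short prefixes where the rounding dominates the tail, and then threading every source of $\varepsilon$-loss — the $1-\varepsilon$ in the threshold $t$, the $1-3\varepsilon$ sampling success probability, the tail fraction, and the internal $1-\varepsilon$ slack of Lemma~\ref{lem:HS17}'s Chernoff argument — carefully enough that the exponent lands at $1-6\varepsilon$ rather than something weaker, while keeping the failure probability at exactly $e^{-k(1-5\varepsilon)\varepsilon^2/2}$ so that the stated lower bound on $k$ is precisely what the union bound over the $\ell$ guesses needs. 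A secondary point to get right is the bookkeeping of which clean events are shared: Lemmas~\ref{lem:emptyset}, \ref{lem:iterationsapx2}, and the set-size half of Lemma~\ref{lem:marg2} all live under the single $\delta/2$-probability event of Corollary~\ref{lem:discarding}, so only one additional $\delta/2$ budget (the Chernoff above) is available and must not be overspent.
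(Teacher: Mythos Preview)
Your proposal is correct and mirrors the paper's proof: the same conditioning on Corollary~\ref{lem:discarding}, the same case split on how the outer loop terminated (invoking Lemma~\ref{lem:iterationsapx2} in the first case), the same three-way classification of the $k$ inserted elements in the second case, and the same appeal to Lemma~\ref{lem:HS17} followed by a union bound over the $\le \ell$ guesses. The only cosmetic difference is that the paper phrases item~(iii) as deleting the tail positions to form a subset $T\subseteq S$ with $|T|\ge(1-\varepsilon)k$ and then applying Lemma~\ref{lem:HS17} directly to $T$ (using $f(S)\ge f(T)$ at the end), whereas you keep all $k$ elements and set $\xi_j\ge 0$ on the tails; these are equivalent, and the paper simply asserts $|T|\ge(1-\varepsilon)k$ without working through the rounding issues you flagged.
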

\begin{proof} With probability $1- \delta/2$,  Corollary~\ref{lem:discarding}, and consequently Lemma~\ref{lem:iterationsapx2} and Lemma~\ref{lem:marg2}, hold for all iterations of the inner while-loop and we assume this is the case for the remainder of this proof.

Consider $v$ at some  iteration of the binary search over $V$.
If the outer while-loop terminated after $\varepsilon^{-1}$ iterations, then by Lemma~\ref{lem:iterationsapx2}, we have $f(S) \geq \left(1 - 1/e\right) v$.

Otherwise, the outer while-loop terminated with $|S| = k$. The algorithm adds elements to $S$ that are of two types: those added before the if condition and those in $A_{i^{\star}}$ added after. Let $T \subseteq S$ be the set obtained by discarding from $S$ the elements  $a \in A_{i^{\star}}$ that, at the iteration of the inner while-loop where $a$ was added, had position $i$ in the sequence $a_1, \ldots, a_{k - |S|}$ such that $(1 - \varepsilon)i^{\star} \leq i \leq i^{\star}$. This set $T$ is such that  $|T| \geq (1 - \varepsilon)|S| = (1 - \varepsilon)k$.

 Consider $a_i \in S$ added before the if condition and  let $S_{i-1}$ be the set of elements in $S$ added to $S$ before $a_i$. Since $a_i$ was added to $S$, by submodularity,  we have $f_{S_{i-1}}(a_i) \geq (1 - \varepsilon) \frac{1}{k}(v - f_{S_{i-1}})$ with probability $1$. Consider an elements $a_i \in T$.  By Lemma~\ref{lem:marg2} and by definition of $T$, we have that  independently for each $a_i \in T$, with probability at least $1 - 3 \varepsilon$, 
$f_{S_{i-1}}(a_i) \geq (1 - \varepsilon) \frac{1}{k}(v - f_{S_{i-1}})$ where 
$S_{i-1}$ is the set of elements in $T$ added to $T$ before $a_i$.

By Lemma~\ref{lem:HS17} and since $|T| \geq (1 - \varepsilon) k$, with $\delta = 3 \varepsilon$, $\mu = 1 - \varepsilon$, and $\varepsilon = \varepsilon$, we have that $f(T) \geq \left(1 - e^{-(1 - \varepsilon)(1 - \varepsilon)(1 - 3 \varepsilon)(1- \varepsilon)}\right) v \geq \geq \left(1 - e^{-(1 - 6\varepsilon)}\right) v$ with probability at least $1 - \delta/(2\ell)$ if $k \geq \frac{2 \log(2\delta^{-1} \ell)}{\varepsilon^2 (1 - \varepsilon)(1 - \varepsilon) (1 - 3\varepsilon)} \geq \frac{2 \log(2\delta^{-1} \ell)}{\varepsilon^2 (1 - 5\varepsilon)} $. By monotonicity, $f(S) \geq f(T)$.

By a union bound over all $\ell$ iterations of the binary search over $V$, for all $v$ considered during this binary search, we have that $f(S) \geq \left(1 - e^{-(1 - 6\varepsilon)}\right) v$ with probability at least $1 - \delta/2$. 
\end{proof}

\thmmain*
\begin{proof}
By the definition of $V$ and since $\max_{a \in N} f(a) \leq \OPT \leq \max_{|S| \leq k} \sum_{a \in S} f(a)$, there exists $v' \in V$ such that $v' \in [(1 - \varepsilon) \OPT, \OPT]$. 

By Lemma~\ref{lem:main}, with probability at least $1 -  \delta$, we have that for all $v$  at some iteration of the binary search over $V$ such that $v \leq \OPT$, 
$f(S) \geq \left(1 - e^{-(1 - 6\varepsilon)}\right) v.$ Since $v' \leq \OPT$, it must be the case that $v^{\star} \geq v'$ and we get
$$\left(1 - e^{-(1 - 6\varepsilon)}\right)^{-1}f(S_{v^{\star}}) \geq \ v^{\star} \geq v' \geq  (1 - \varepsilon) \OPT.$$ 

For $\varepsilon \in (0, 0.1)$, we have $e^{6\varepsilon} \leq 1 + 9 \varepsilon$. We get
$$(1 - \varepsilon)(1-e^{-(1 - 6\varepsilon}) \geq (1 - \varepsilon)(1-e^{-1}(1 + 9\varepsilon)) \geq 1 - e^{-1} - 4 \varepsilon.$$
\end{proof}



\section{Additional Information for Experiments}
\subsection{Benchmark algorithms}
\label{ssec:experiments_algos}
Our first set of experiments compares \algoptimized's performance to three state-of-the-art low-adaptivity algorithms:

\begin{itemize}
\item \textbf{Amortized-Filtering \citep{BRS19}.} At each round, \textsc{Amortized-Filtering} sets an adaptive value threshold based on the value of its current solution. It uses this threshold to filter remaining elements into high-value and low-value groups. It then adds a randomly chosen set of high-value elements to the solution and updates the threshold for the next round. \textsc{Amortized-Filtering} achieves a $(1-1/e-\epsilon)$ approximation in $O(\log (n)\varepsilon^{-3})$ rounds. When \OPT \ is unknown as it is in all experiments here, \textsc{Amortized-Filtering} is typically run once each for several geometrically increasing guesses for \OPT \ to maintain this approximation guarantee.
\item \textbf{Randomized-Parallel-Greedy \citep{chekuri2018submodular}.}  At each round, \textsc{Randomized-Parallel-Greedy} partitions elements into high-value and low-value groups based on their marginal contributions. It then uses the multilinear extension to estimate the maximum probability (`step size') with which it can randomly add elements from the high value group to the solution while maintaining theoretical guarantees. This algorithm achieves a $(1-1/e- \epsilon)$-approximation in $O(\log(n)\varepsilon^{-2})$ parallel rounds. When \OPT \ is unknown, \textsc{Randomized-Parallel-Greedy} must be run with at least one upper-bound guess for \OPT \ to maintain this approximation guarantee.
\item \textbf{Exhaustive-Maximization \citep{FMZ19}.} \textsc{Exhaustive-Maximization} begins by fixing a value threshold. It then iteratively partitions remaining elements into high-value and low-value groups by determining whether each element's average marginal contribution to a random set exceeds the value threshold. It draws elements from the high-value group to form a candidate solution $S$. Finally, it lowers the value threshold and repeats the process for several rounds, keeping track of the candidate solution with the highest value. \textsc{Exhaustive-Maximization} achieves a $(1-1/e- \epsilon)$-approximation in $O(\log (n)\varepsilon^{-2})$ adaptive rounds.
\end{itemize}

Our second set of experiments compares \algoptimized \ to a parallel version of \textsc{Lazier-than-Lazy-Greedy} (\textsc{LTLG}):
\begin{itemize}
\item \textbf{Parallel-Lazier-than-Lazy-Greedy (Parallel-LTLG) ~\cite{mirzasoleiman2015lazier}.} \textsc{LTLG} is widely regarded as the fastest algorithm for submodular maximization in practice. At each round, it draws a small random sample of elements. It then attempts a lazy update via a single query by testing whether the element in the sample with the highest previously-computed marginal value has a current marginal value that exceeds the second-highest previously-computed marginal value among the samples. If this is the case, it adds this best element to the solution. Otherwise, it computes marginal contribution of all samples and adds the best element in the sample set to the solution. It achieves a $(1-1/e- \epsilon)$ approximation in $k$ adaptive rounds.
\end{itemize}

For calibration, we also ran (1) a parallel version of the standard \textsc{Greedy} algorithm and (2) an algorithm that estimates the value of a random solution:

\begin{itemize}
\item \textbf{Parallel-Greedy.} \textsc{Greedy} iteratively adds the element with the highest marginal value to the solution set $S$ for each of $k$ rounds. \textsc{Greedy} achieves a $1-1/e$ approximation in $k$ adaptive rounds, and its solution values are widely regarded as an heuristic upper bound.


\item \textbf{Random.} \textsc{Random} returns the average value of a randomly chosen set $S$ of $k$ elements.
\end{itemize}

\subsection{Choosing parameters $\varepsilon$ and $\delta$}
\label{app:epsilons}
For \emph{Experiment set 1}, we choose all algorithms' $\delta$ and $\varepsilon$ such that each guarantees a $(1-1/e-0.1)$ approximation with probability $0.95$. We therefore choose $\delta=0.95$ for all algorithms and set $\varepsilon$ to $0.025$ for \algoptimized; $0.1$ for \textsc{Amortized-Filtering}; $0.1$ for for \textsc{Exhaustive-Maximization}; and $0.048$ for \textsc{Randomized-Parallel-Greedy}. For \emph{Experiment set 2}, the $(1-1/e-\varepsilon)$ approximation guarantee of \textsc{LTLG} holds in expectation, so we set $\varepsilon=0.1$ for \textsc{Parallel-LTLG} and $\varepsilon=0.025$ for \algoptimized, which gives the same $(1-1/e-0.1)$ approximation in expectation (see Theorem \ref{thm:main}).

\subsection{Objective functions and data sets}
\label{ssec:datasets}

\subsubsection{Max cover on random graphs}\label{ssec:random_graphs}
Recall the max cover objective: given a graph $G$, the cover function $f(S)$ measures the count of nodes with at least one neighbor in $S$. This is a canonical monotone submodular function. To compare the algorithms' runtimes under a range of conditions, we solve max cover on synthetic graphs generated via four different well-studied graph models:

\begin{itemize}
\item \textbf{Erd\H{o}s R\'{e}nyi.} We generate $G(n, p)$ graphs with a $p=0.01$ probability of each edge. Since many nodes have similar degree in this model and each node's edges are spread randomly across the graph, a random set of nodes often achieves good coverage.
\item \textbf{Stochastic block model.} We generate SBM graphs with a $p=0.1$ probability of an edge between each pair of nodes in the same cluster. Here, we expect that a good solution will cover nodes in all clusters. 
\item \textbf{Watts-Strogatz.} We generate WS graphs initialized as ring lattices with $2$ edges per node and a $p=0.1$ probability of rewiring edges. In these `small-world' graphs, many nodes have identical degree, so good solutions contain nodes chosen to minimize  coverage overlaps. 
\item \textbf{Barb\'{a}si-Albert.} We generate BA graphs with $m=1$ edges added per iteration. BA graphs exhibit scale-free structure and tend to have a small set of high-degree nodes. Therefore, it is often possible to obtain high coverage in these graphs by choosing the  highest degree nodes.
\end{itemize}

\subsubsection{Random graphs: experiment size} 
For ER, WS, and BA graphs, we set $n=500$ in our small experiments and $n=100,000$ in our large experiments. For SBM graphs, we fix parameters to approximately match these sizes in expectation, as the actual size of an SBM graph is a draw from a random process. Specifically, for small SBM experiments we draw $10$ clusters of $10$ to $100$ nodes each, and for large experiments we draw $50$ clusters of $100$ to $5000$ nodes each.

\subsubsection{Real data}\label{sec:real_data}
\begin{itemize}
%
%
%
\item \textbf{Traffic speeding sensor placement.} In this application, we select a set of locations to install traffic speeding sensors on a highway network, and our objective is to choose locations to maximize the traffic that the sensors observe. Similarly to ~\citep{BBS18}, we conduct this experiment using data from the CalTrans PeMS system ~\citep{pems}, which allows us to reconstruct the directed network where nodes are locations on each California highway ($40,000$ locations) and directed edges are the total count of vehicles that passed between adjacent locations in April, 2018. We use the directed, weighted max cover function to measure the total count of traffic observed at a set of sensor locations. For a given set $S$ of sensor locations, this objective function returns the sum of edge weights (traffic counts along roadway sections) for which at least one endpoint is in $S$. For our small experiments, we follow \citep{BBS18} and restrict the network to the $521$ locations within a $10$ miles of the Los Angeles city center. For our large experiments, we expand this to all of the $\sim 2000$ locations in the region.
%
%

\item \textbf{Movie recommendation.} In movie recommendation, the objective is to recommend a small, diverse, and highly-rated set of movies based on a data set of users' movie ratings. We use the objective function and dataset from \citep{BS18b}, which sums the ratings of movies in the set $S$ and includes a diversity term that captures how well the chosen set of movies covers the set of movie genres in the data. A good set of movies includes movies that have high overall ratings, but it also should appeal to users' different tastes by including at least one film that each user rates very highly. Therefore, we also include a diversity term that counts the number of users who would give a high rating to at least one film in the set of movies $S$. We obtain the following objective:

\begin{align}
\label{eqn:movie_objective}
    f(S) = \sum_{i\in U} \sum_{j\in S} r_{i,j} +\alpha C(S) + \beta D(S) 
\end{align}

where $U$ is the set of users $i$, $r_{i,j}$ is user $i$'s predicted rating of movie $j$; $C(S)$ is a coverage function that counts the number of different genres covered by $S$; $D(S)$ is a coverage function that counts the number of users with at least one highly rated film in $S$; and parameters $\alpha\ge 0$ and $\beta\ge 0$ control the relative weight that the objective function places on highly rated movies versus diversity. Note that eqn. \ref{eqn:movie_objective} is a monotone submodular function. As in \citep{BS18b}, we predict missing ratings for the user-movie ratings matrix using the standard approach of low-rank matrix completion via the iterative low-rank SVD decomposition algorithm \textsc{SVDimpute} analyzed in ~\citep{troyanskaya2001missing}. We set $\alpha=0.5 \max_j(\sum_i r_j)$ and $\beta=1$, and we define a high rating as $r_j>4.5$ (which corresponds to $1$\% of the ratings). For our small experiments, we randomly select $500$ movies and users from the MovieLens $1m$ data set of $6000$ users' ratings of $4000$ movies \citep{movielens}. For our larger experiments, we use the entire data set.

\item \textbf{Revenue maximization on YouTube.} In the revenue maximization experiment, we choose a set of YouTube users who will each advertise a different product to their network neighbors, and the objective is to maximize product revenue. We adopt an objective function and dataset based on \citep{mirzasoleiman2016fast}. Specifically, the expected revenue from each user is a function $V(S)$ of the sum of influences (edge weights) of her neighbors who are in $S$:

\begin{align}
\label{eqn:youtube_obj}
    f(S) &= \sum_{i \in X } V\Big({ \sum_{j\in S} w_{i,j} }\Big)\\
    V(y)&=y^\alpha
\end{align}

where $X$ is the set of all users (nodes) in the network and $w_{i,j}$ is the network edge weight between users $i$ and $j$, and $\alpha: 0<\alpha<1$ is a parameter that determines the rate of diminishing returns on increased cover. Note that eqn. \ref{eqn:youtube_obj} is a monotone submodular function. We conduct our small experiments on the social network of $50$ randomly selected communities ($\sim500$ nodes) from the $5000$ largest communities in the YouTube social network ~\citep{youtube}. For our larger experiments, we increase this to $2000$ communities ($\sim18000$ nodes).  We set $\alpha=0.9$ for all experiments, and we draw the weights of edges in this network from the uniform distribution $U(1,2)$.

\item \textbf{Influence maximization on a social network.} In this application, we select a set of social network `influencers' to post about a topic we wish to promote, and our objective is to select the set that achieves the greatest aggregate influence. We adopt the following random cover function: an arbitrary social network user has a small independent probability of being influenced by each influencer to whom she is connected, so we maximize the expected count of users who will be influenced by at least one influencer. The probability that a single user $i$ will be influenced is:

\label{eqn:influence}
\[ \begin{cases} 
      f_i(S) &= 1 \text{ \ \ for }  i\in S \\
      f_i(S) &= 1-(1-p)^{|N_S(i)|} \text{ \ \ for }  i\not \in S \\
   \end{cases}
\]
where $|N_S(i)|$ is node $i$'s count of neighbors who are in $S$. We set $p=0.01$. We conduct our small experiments on the CalTech Facebook Network data set \citep{traud2012social} of $769$ Facebook users \& $17000$ edges, and we conduct our larger experiments on the Epinions data set of $27000$ users and $100000$ edges  \citep{nr-aaai15}. 
\end{itemize}

\subsection{Parallelization}
 We parallelize all algorithms via Message Passing Interface (MPI). We make this selection both because it is the industry standard, and also because it allows precise control over the architecture of parallel communication between processors as well as exactly what information is communicated between processors. This allows us to build efficient parallel architectures that minimize communication, such that our implementations are CPU-bound (i.e. query-bound). This property both permits fast implementations also aligns with the theoretical view of adaptive sampling algorithms, which assume that computation time is a function of rounds of queries rather than communication, data copying, etc. In contrast, simpler-to-use parallel libraries (e.g. $JobLib$) often communicate copies of all data to all processors at each communicative step, which may render implementations based on these simpler libraries both slower and also communication-bound.


\subsection{AWS Hardware}
\label{ssec:hardware}
While our MPI implementations of the algorithms are scalable to thousands of cores, we conduct all experiments on an \emph{md5.24xlarge} instance with $96$ cores---the largest single instance currently available on AWS (computing on more cores requires launching an AWS cluster). This instance features Intel Xeon Platinum $8000$ series (Skylake-SP) processors with sustained all core Turbo CPU clock speed of up to $3.1$ GHz.

We select this hardware to ensure both the internal validity and external validity of our experiments. Specifically, with regard to internal validity, we note that if we instead had scaled up to a cluster of multiple instances, then communication times between cores in the same instance vs. across instances may differ to a greater extent. Because different algorithms require different amounts and structures of communication between processors, this might bias runtimes in unpredictable ways.

Second and more importantly, our goal in this paper is \emph{not} to show that \algoptimized \ is the fastest practical algorithm only on large scale state-of-the-art hardware (though larger-scale hardware would increase the runtime advantage of \algoptimized \ over alternatives---see Section \ref{ssec:resultsLTLG}). Instead, our goal is to show that \algoptimized \ is faster than alternatives even with modest hardware that is widely accessible to researchers, and these fast runtimes can be further improved on larger scale hardware. Specifically, we note that this \emph{md5.24xlarge} instance has far fewer cores than the $n$ processors necessary to unlock the full speed potential of \algoptimized, whereas adding more processors cannot accelerate \textsc{Parallel-LTLG} for most experiments due to the fact that \textsc{Parallel-LTLG} has sample complexity less than 95 for many values of $k$ we tried.

\subsection{Instance setup}
We initialize the \emph{md5.24xlarge} instance with Amazon's Deep Learning AMI (Amazon Linux) Version 23.0. We install the \emph{Open-MPI} MPI library on our AWS instance and run all experiments via $ssh$ using $mpirun$ to launch and execute the experiments. 

\subsection{Measuring parallel runtimes.}
\label{ssec:measure_ptime}
We measure true parallel time in the following manner. First, before we start the runtime clock, all processors are initialized with a copy of the objective function and dataset for the experiment, which is followed by a call to a blocking parallel barrier ($comm.barrier()$). This forces the condition that no processor begins computations---and the clock does not start---until all processors are initialized. Runtime is then measured via MPI's parallel clock, $MPI.Wtime()$, from the moment this barrier is completed and the algorithm function is called. Upon the algorithm's completion, we use a blocking parallel barrier ($comm.barrier()$) call to all processors followed by a call to the parallel clock $MPI.Wtime()$. This ensures that a case where one processor finishes its part of the computations early does not result in an erroneously reported lower runtime.

We run all experiments on $95$ cores of the 96-core instance. We deliberately leave one core free during all experiments so that the $95$ cores conducting the experiment are not simultaneously scheduled to run a background task, which may result in a slowdown that would endanger the integrity of measured runtimes.

\subsection{Overview of fast parallel implementations}
\label{ssec:implementations}
For all algorithms, we implement several generally applicable and algorithm-specific optimizations. The intuition behind our generally applicable optimizations is to ensure that (1) implementations are optimized and vectorized such that all operations besides queries take negligible time (such that the algorithms are effectively query-bound); (2) communicative architectures between processors are designed to avoid superfluous communication; (3) we implement parallel reduces where possible to leverage these parallel architectures to further reduce computation; and (4) no algorithm ever queries the marginal value of an element when this marginal value is known to be 0, i.e. $f_T(x), x\in T$. Algorithm-specific optimizations are discussed below.

\subsection{Fast parallel implementation of low-adaptivity algorithms}\label{sec:fast_as}

\begin{itemize}
    \item \textbf{Amortized-Filtering}.
    Recall that when \OPT \ is unknown, one must run \textsc{Amortized-Filtering} once for each guess of \OPT. This requires $\sim 60$ unique runs of \textsc{Amortized-Filtering} to maintain the approximation guarantee even for relatively small $k$ and $\varepsilon=0.1$. Therefore, to optimize this algorithm, we (1) implement binary search over these guesses of \OPT. We also (2) introduce the same stopping condition that we describe for \algoptimized, such that whenever a run of \textsc{Amortized-Filtering} with a particular guess of \OPT \ finds a solution $S$ with $f(S)\ge (1-1/e-0.1)v$, where $v$ is the sum of the $k$ top singletons, then we return this solution. These two optimizations reduce the value of \textsc{Amortized-Filtering}'s solutions in practice, but they dramatically accelerate its runtimes to provide a more stringent runtime benchmark for \algoptimized.
    \item \textbf{Exhaustive-Maximization}. \textsc{Exhaustive-Maximization} tends to run significantly more loop iterations than other benchmarks, so optimizations that reduce these loop iterations result in significant speedup. We note that two loops of this algorithm loop over indices, where indices are calculated as elements of a geometric sequence then rounded to integers. This process results in the algorithm looping over numerous redundant indices, as many unique floating point numbers from these sequences round to the same whole numbers. We therefore achieve large speedups by precomputing these sequences and looping only over unique indices. In addition to this optimization, we also note that the \textsc{Reduced-Mean} subroutine (which is responsible for the vast majority of computation time) requires the processors to parallel-compute a fraction of elements that exceed a threshold. A naive approach would be to computing marginal values in parallel, $gather$ or $allgather$ all of these values (i.e. each processor communicates its share of values to all other processors), and then compute this fraction locally. However, our optimized approach uses a fast parallel reduction where each processor computes its local fraction, then a fast parallel reduce using $MPI.Sum()$ such that (1) processors need only communicate this fraction (float) instead of the entire vector of elements, and (2) the global fraction is then rapidly computed in a parallel reduce. More advanced MPI architectures such as these result in meaningfully lower runtimes in practice, particularly when using relatively fast-to-compute objective functions.
    \item \textbf{Randomized-Parallel-Greedy}. The key decision when implementing \textsc{Randomized-Parallel-Greedy} is how to choose guesses for the step size $\delta$, which is the probability with which we randomly add high-value elements to the solution. This is important because the majority of \textsc{Randomized-Parallel-Greedy}'s runtime is taken up by calls to the multilinear extension that are made in order to choose $\delta$. Specifically, recall that in each iteration, \textsc{Randomized-Parallel-Greedy} calls the multilinear extension to search for the maximum $\delta$ that obeys certain conditions. Here, if we implement \textsc{Randomized-Parallel-Greedy} such that it tests more guesses (i.e. more closely spaced) guesses for $\delta$, then we may find a $\delta$ that is closer to the true maximum $\delta$, but this uses additional calls to the multilinear extension that slow runtimes. We therefore precompute just $n$ guesses for $\delta$ as $[1/n, 2/n, \hdots]$, iterate over each of these, and set $\delta$ to the rightmost value that does not violate the conditions. Based on this choice, the minimum value we attempt for $\delta$ will increase $|S|$ by one element in expectation when all elements are high-valued. By using this relatively large $1/n$ stepsize between subsequent guesses for $\delta$, we reduce solution values in practice, but we accelerate the algorithm (thus providing a more difficult runtime benchmark for \algoptimized). Before making this choice, we also experimented with geometrically-spaced guesses for $\delta$, but found that this resulted in a significant further reduction in performance and also that it caused the algorithm to attempt many very small options for $\delta$ that were unlikely to result in adding a single element.
    
    To run \textsc{Randomized-Parallel-Greedy}, we also require guesses for \OPT. The analysis in \citep{chekuri2018submodular} shows that we can either use multiple guesses for \OPT, or we can use a single guess that is an upper bound for \OPT. We use the latter option, as using a single guess is the fastest approach, so this choice is consistent with our goal of providing the most difficult speed benchmarks for \algoptimized. Specifically, we guess \OPT \ to be the sum $v = \max_{|S| \leq k} \sum_{a \in S} f(a)$ of the $k$ highest valued singletons, which is an upper bound on $\OPT$. We note that this is a tighter upper bound on the value of the true \OPT \ than commonly used alternatives (e.g. $k$ times the value of the top singleton), so by choosing this guess, we further accelerate our runs of \textsc{Randomized-Parallel-Greedy}. We also note that using a single guess for \OPT \ achieves this greater speed by sacrificing some solution value, which is why in our experiments \textsc{Randomized-Parallel-Greedy} sometimes finds solutions that have lower values than other benchmarks.
\end{itemize}

\subsection{Fast parallel implementation of \textsc{Lazier than Lazy Greedy}}\label{sec:pltlg}
\begin{itemize}
\item \textbf{Parallel-LTLG.} Designing a fast implementation of \textsc{Parallel-LTLG} is nontrivial because at each iteration, we want to attempt a lazy update (which requires a single query), but in the event that this lazy update fails, we do not want to complete the iteration any slower than if we had not attempted the lazy update. Put differently, the lazy update attempted at each iteration should result in a speedup when it succeeds, but never in a slowdown when it fails, such that \textsc{Parallel-LTLG} is strictly faster than \textsc{Stochastic-Greedy}.

\smallskip
To accomplish this, we adopt the following optimized parallel architecture for \textsc{Parallel-LTLG}. At each of $k$ rounds, the root processor draws a set $R$ of sample elements from remaining elements in the ground set $X\backslash S$. The root process broadcasts these sample elements to the other processors. Then, all processors simultaneously make a single query: the root process queries the marginal value of the best element according to previous (lazy) marginal values, and remaining processors each query a single other element from the sample. If the root process succeeds in finding a lazy update with its single query, it communicates this to all processors, and all processors add this element to $S$ and move to the next iteration. If the lazy update fails, then the $c$ processors have \emph{already} completed $c$ marginal value queries of the samples (so no time is lost). They then simply each compute $1/c$ of the $(|R|-c)$ \emph{remaining} samples' marginal values, communicate them, add the best element to $S$, and move to the next iteration.
\end{itemize}

\subsection{Experiment set 2 results:  additional discussion}
\label{ssec:LTLGqueries}
\begin{figure}[t]
\centering
\includegraphics[height=0.480\textwidth]{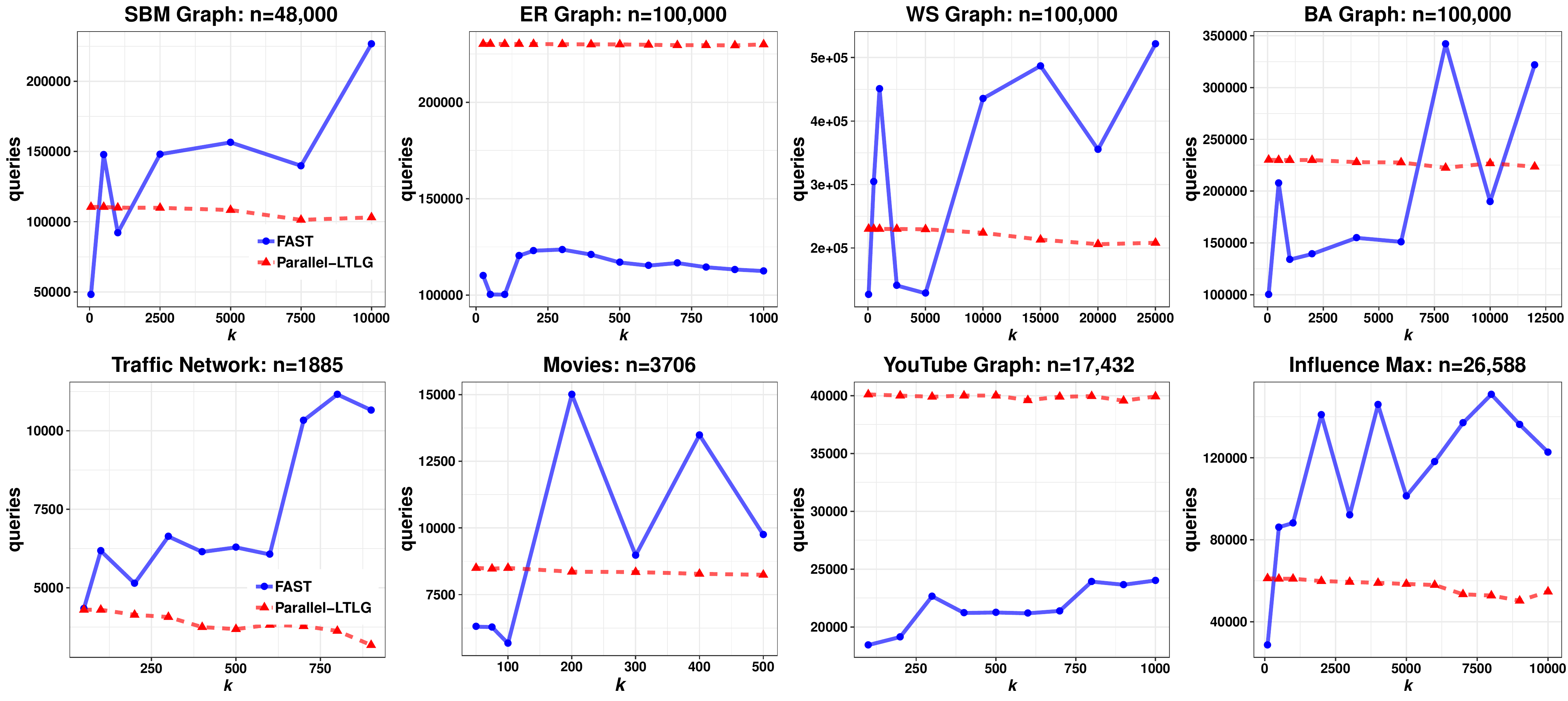}
\caption{\textit{Experiment Set 2: Queries used by \algoptimized \ (blue) \ vs. \textsc{Parallel-LTLG} (red).}}
\label{fig:queries}
\end{figure}

Figure \ref{fig:queries} plots queries used by \algoptimized \ and \textsc{Parallel-LTLG} for the 8 objectives in \emph{Experiment set 2}. When counting queries for \textsc{Parallel-LTLG}, we count rounds where a lazy update succeeded as single query rounds despite the fact that in this case \textsc{Parallel-LTLG} uses $\max[c, s]$ queries where $c$ is the number of processors and $s$ is its sample complexity per round. This choice allows us to compare the queries used by \algoptimized \ vs. \textsc{Parallel-LTLG} for any $k$ and determine whether \algoptimized \ used fewer queries than \emph{serial} $LTLG$ would use. Note that this occurs for various $k$ in $7$ of the $8$ experiments. We also note that in practice, serial \textsc{LTLG} is often slower than \algoptimized \ even when both perform the same number of queries due to the fact that \algoptimized \ performs more queries at a time (i.e. more per round for fewer rounds), which is often computationally faster.


%


\end{document}